\let\oldnl\nl
\newcommand{\nonl}{\renewcommand{\nl}{\let\nl\oldnl}}
\newcommand{\reals}{\mathbb{R}}
\newcommand{\naturals}{\mathbb{N}}
\newcommand{\abs}[1]{\ensuremath \left| #1 \right|}
\newcommand{\normdouble}[1]{\ensuremath \lVert#1\rVert}
\newcommand{\norm}[1]{\ensuremath \left | #1\right |}
\newcommand{\given}{\, \vert \,}
\newcommand{\ag}[1]{\ensuremath \left\langle#1\right\rangle}
\providecommand{\OO}{\mathcal{O}}
\newcommand{\oo}{
 \mathchoice
  {{\scriptstyle\mathcal{O}}}
  {{\scriptstyle\mathcal{O}}}
  {{\scriptscriptstyle\mathcal{O}}}
  {\scalebox{.6}{$\scriptscriptstyle\mathcal{O}$}}
 }
\newcommand{\trace}{\trm{tr}}
\DeclareMathOperator*{\argmin}{arg\;min}
\newcommand{\aeq}[1]{\begin{align} #1 \end{align}}
\newcommand{\aeqs}[1]{\begin{align*} #1 \end{align*}}
\newcommand{\aed}[1]{\begin{aligned} #1 \end{aligned}}
\newcommand{\beq}[1]{\begin{equation}#1\end{equation}}
\newcommand{\beqs}[1]{\begin{equation*}#1\end{equation*}}
\newcommand{\trm}[1]{\mathrm{#1}}
\newcommand{\ilist}[1]{\begin{itemize}{#1}\end{itemize}}
\providecommand\f[2]{\ensuremath \frac{#1}{#2}}
\providecommand\rbrac[1]{\ensuremath \left(#1\right)}
\providecommand\sqbrac[1]{\ensuremath \left[#1\right]}
\providecommand\cbrac[1]{\ensuremath \left\{#1\right\}}
\DeclarePairedDelimiter{\ceil}{\lceil}{\rceil}
\newtheorem{theorem}{Theorem}
\newtheorem{lemma}[theorem]{Lemma}
\theoremstyle{definition}
\newtheorem{definition}[theorem]{Definition}
\newtheorem{example}[theorem]{Example}
\newtheorem{remark}[theorem]{Remark}
\newcommand{\E}{\mathbb{E}}
\providecommand{\ind}{{\bf 1}}
\renewcommand{\implies}{\Rightarrow}
\newcommand{\s}{\sigma}
\renewcommand{\r}{\rho}
\renewcommand{\t}{\tau}
\renewcommand{\a}{\alpha}
\newcommand{\e}{\epsilon}
\renewcommand{\b}{\beta}
\newcommand{\g}{\gamma}
\renewcommand{\d}{\delta}
\newcommand{\D}{\Delta}
\renewcommand{\L}{\Lambda}
\renewcommand{\l}{\lambda}
\renewcommand{\S}{\Sigma}
\def \CC {\mathcal{C}}
\def \LL {\mathcal{L}}
\def \OO {\mathcal{O}}
\newcommand{\data}{\trm{input}}
\newcommand{\convolution}{\trm{conv}}
\newcommand{\meanpool}{\textrm{mean-pool}}
\newcommand{\drop}{\trm{drop}}
\newcommand{\fc}{\trm{fc}}
\newcommand{\softmax}{\trm{softmax}}
\newcommand{\block}{\trm{block}}
\newcommand{\mnistfc}{\trm{mnistfc}}
\newcommand{\lenet}{\trm{LeNet}}
\newcommand{\allcnn}{\textrm{All-CNN}}
\newcommand{\lap}{\D}
\newcommand{\grad}{\nabla}
\newcommand{\R}{\reals}
\newcommand{\hmean}{\trm{HM}}
\newcommand{\ab}[2]{{\color{BurntOrange}#1}\marginnote{\tiny\noindent{\color{BurntOrange}[AB]\ #2}}}
\newcommand{\ignore}[1]{}
\title[PDEs for deep neural networks]{Deep Relaxation: partial differential equations\\ for optimizing deep neural networks}
\author[Chaudhari]{Pratik Chaudhari$^{1*}$}
\author[Oberman]{Adam Oberman$^{2*}$}
\author[Osher]{Stanley Osher$^3$}
\author[Soatto]{Stefano Soatto$^1$}
\author[Carlier]{Guillaume Carlier$^4$}
\thanks{$^*$ Joint first authors}
\begin{document}
\maketitle
{
\vspace*{-0.15in}
\footnotesize
\noindent $^{1}$ Computer Science Department, University of California, Los Angeles.\\
$^{2}$ Department of Mathematics and Statistics, McGill University, Montreal.\\
$^{3}$ Department of Mathematics \& Institute for Pure and Applied Mathematics, University of California, Los Angeles.\\
$^{4}$ CEREMADE, Universit\'e Paris IX Dauphine.\\[0.05in]
Email:\ \href{mailto:pratikac@ucla.edu}{pratikac@ucla.edu},
\href{mailto:adam.oberman@mcgill.ca}{adam.oberman@mcgill.ca},
\href{mailto:sjo@math.ucla.edu}{sjo@math.ucla.edu},
\href{mailto:soatto@ucla.edu}{soatto@ucla.edu},
\href{mailto:carlier@ceremade.dauphine.fr}{carlier@ceremade.dauphine.fr}\\
}

{\small
\noindent \textbf{\emph{Abstract:}}
In this paper we establish a connection between non-convex optimization methods for training deep neural networks and nonlinear partial differential equations (PDEs). Relaxation techniques arising in statistical physics which have already been used successfully in this context are reinterpreted as solutions of a viscous Hamilton-Jacobi PDE. Using a stochastic control interpretation allows we prove that the modified algorithm performs better in expectation that stochastic gradient descent.
Well-known PDE regularity results allow us to analyze the geometry of the relaxed energy landscape, confirming empirical evidence.
The PDE is derived from a stochastic homogenization problem, which arises in the implementation of the algorithm. The algorithms scale well in practice and can effectively tackle the high dimensionality of modern neural networks.\\
\vspace*{-0.05in}

{\footnotesize \noindent \emph{\textbf{Keywords:}} Non-convex optimization, deep learning, neural networks, partial differential equations, stochastic gradient descent, local entropy, regularization, smoothing, viscous Burgers, local convexity, optimal control, proximal, inf-convolution\\[-0.1in]}

\section{Introduction}
\label{s:intro}

\begin{figure}[!tbh]
\centering
\includegraphics[width=0.45\textwidth]{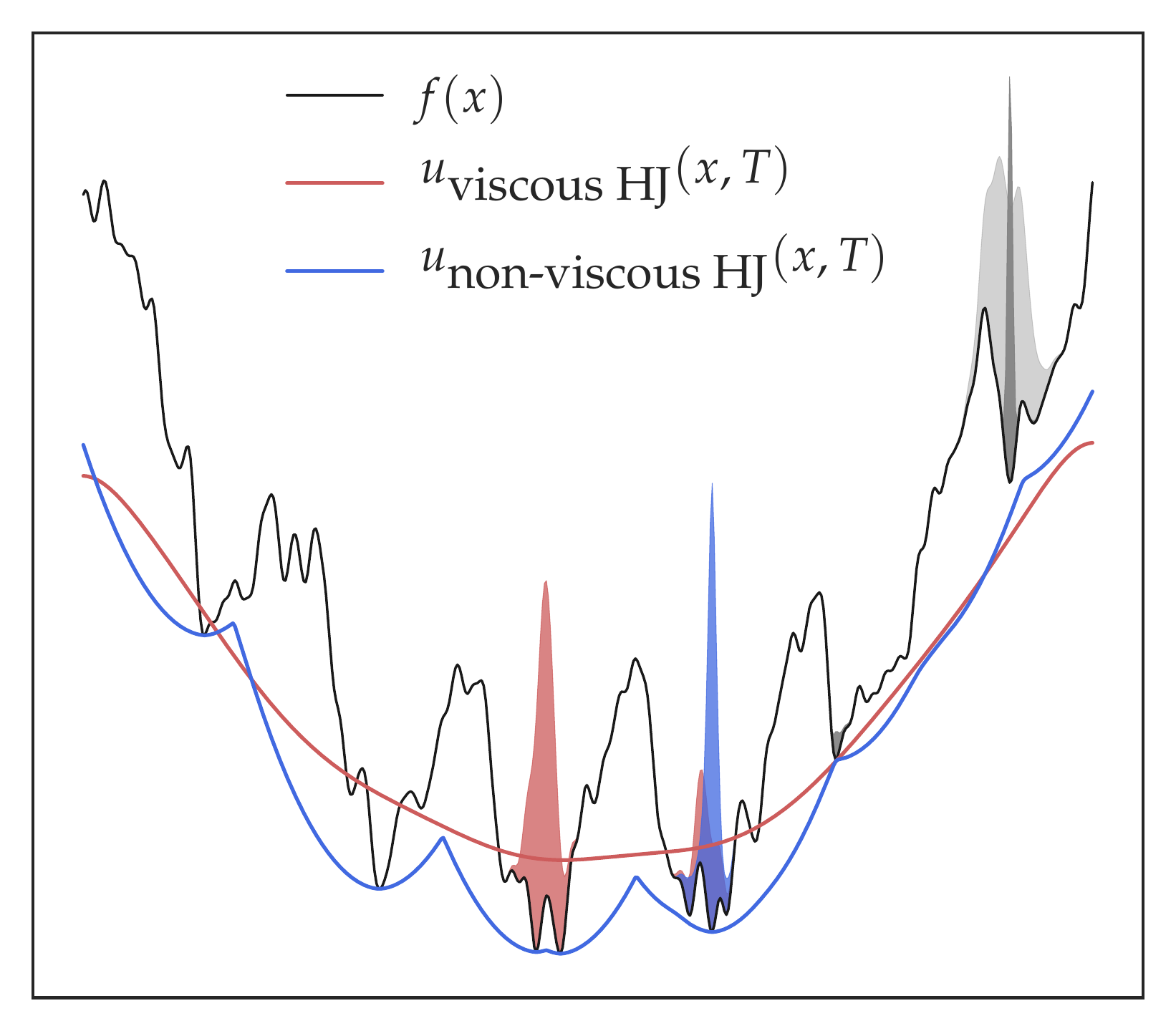}
\caption{\small One dimensional loss function $f(x)$ (black) and smoothing of $f(x)$ by the viscous Hamilton-Jacobi equation (red) and the non-viscous Hamilton Jacobi equation (blue).
The initial density (light grey) is evolved using the Fokker-Planck equation~\eqref{eq:mfg} which uses solutions of the respective PDEs. The terminal density obtained using SGD (dark gray) remains concentrated in a local minimum. The function $u_{\trm{viscous\ HJ}}(x,T)$ (red) is convex, with the global minimum close to that of $f(x)$, consequently the terminal density (red) is concentrated near the minimum of $f(x)$. The function $u_{\textrm{non-viscous\ HJ}}(x,T)$ (blue) is nonconvex, so the presence of gradients pointing away from the global minimum slows down the evolution. Nevertheless, the corresponding terminal density (blue) is concentrated in a nearly optimal local minimum. Note that $u_{\textrm{non-viscous\ HJ}}(x,T)$ can be interpreted as a generalized convex envelope of $f$: it is given by the Hopf-Lax formula~\eqref{eq:hopf_lax}. Also note that both the value and location of some local extrema for non-viscous HJ are unchanged, while the regions of convexity widen. On the other hand, local maxima located in smooth concave regions see their concave regions shrink, value eventually decreases. This figure was produced by solving~\eqref{eq:mfg} using monotone finite differences~\citep{ObermanSINUM}.
\vspace*{-0.1in}
}
\label{fig:smoothing}
\end{figure}

\subsection{Overview of the main results}
\label{ss:overview_results}
Deep neural networks have achieved remarkable success in a number of applied domains from visual recognition and speech, to natural language processing and robotics~\citep{lecun2015deep}. Despite many attempts, an understanding of the roots of this success remains elusive. Neural networks are a parametric class of functions whose parameters are found by minimizing a non-convex loss function, typically via stochastic gradient descent (SGD), using a variety of regularization techniques. In this paper, we study a modification of the SGD algorithm which leads to improvements in the training time and generalization error. 

\citet{baldassi2015subdominant} recently introduced the local entropy loss function, $f_\g(x)$, which is a modification of a given loss function $f(x)$, defined for $x \in \R^n$. \citet{chaudhari2016entropy} showed that the local entropy function is effective in training deep networks. Our first result, in Sec.~\ref{s:le_and_hjb}, shows that $f_\g(x)$ is the solution of a nonlinear partial differential equation (PDE). Define the viscous Hamilton-Jacobi PDE
\begin{align}
\label{eq:vhj}
\tag{viscous-HJ}
 \frac{\partial u}{\partial t} &= - \f{1}{2} \abs{\grad u}^2 + \f{\b^{-1}}{2}\lap u, \qquad \text{for } 0 < t \leq \gamma
 \end{align}
with $u(x,0) = f(x)$. Here we write $\Delta u = \sum_{i=1}^n\ \f{\partial^2}{\partial x_i^2} u$, for the Laplacian. We show that
\[
f_\g(x) = u(x,\g)
\]
Using standard semiconcavity estimates in Sec~\ref{s:semi_concavity}, the PDE interpretation immediately leads to regularity results for the solutions that are consistent with empirical observations.

Algorithmically, our starting point is the continuous time stochastic gradient descent equation,
\beq{
 dx(t) = -\grad f(x(t))\ dt + \b^{-1/2}\ dW(t)
 \label{eq:sde}
 \tag{SGD}
}
for $t \geq 0$ where $W(t) \in \reals^n$ is the $n$-dimensional Wiener process and the coefficient $\b^{-1}$ is the variance of the gradient term $\grad f(x)$ (see Section~\ref{ss:sgd} below).
\citet{chaudhari2016entropy} compute the gradient of local entropy $\grad f_\g(x)$ by first computing the invariant measure, $\r(y)$, of the auxiliary stochastic differential equation (SDE)
\begin{equation}
\label{eq:aux_sde}
 dy(s) = - \left (\g^{-1} y(s) + \grad f(x(t)-y(s)) \right) \ ds+ \b^{-1/2}\ dW(s)
\end{equation}
then by updating $x(t)$ by averaging against this invariant measure
\begin{equation}
 \label{eq:PDE_xbar}
 dx(t) = - \int \grad f\left(x(t) - y\right ) \rho(dy)
\end{equation}
The effectiveness of the algorithm may be partially explained by the fact that for small value of $\g$, that solutions of Fokker-Planck equation associated with the SDE \eqref{eq:aux_sde} converge exponentially quickly to the invariant measure $\rho$.
Tools from optimal transportation~\citep{santambrogio2015optimal} or functional inequalities~\citep{bakry1985diffusions} imply that there is a threshold parameter related to the semiconvexity of $f(x)$ for which the auxiliary SGD converges exponentially. See Sec.~\ref{ss:sgd_continuous} and Remark~\ref{rem:lambdac}. Beyond this threshold, the auxiliary problem can be slow to converge.

In Sec.~\ref{s:homogenization} we prove using homogenization of SDEs~\citep{pavliotis2008multiscale} that \eqref{eq:PDE_xbar} recovers the gradient of the local entropy function, $u(x,\g)$. In other words, in the limit of fast dynamics,~\eqref{eq:PDE_xbar} is equivalent to
\begin{equation}
 \label{xbar}
 dx(t) = -\grad u(x(t), \g)\ dt
\end{equation}
where $u(x,\g)$ is the solution of \eqref{eq:vhj}. Thus while smoothing the loss function using PDEs directly is not practical, due to the curse of dimensionality, the auxliary SDE \eqref{eq:aux_sde} accomplishes smoothing of the loss function $f(x)$ using exponentially convergent local dynamics.

The Elastic-SGD algorithm~\citep{zhang2015deep} is an influential algorithm which allowed training of Deep Neural Networks to be accomplished in parallel. We also prove, in Sec.~\ref{ss:esgd_local_entropy}, that the Elastic-SGD algorithm~\citep{zhang2015deep} is equivalent to regularization by local entropy. The equivalence is a surprising result, since the former runs by computing temporal averages on a single processor while the latter computes an average of coupled independent realizations of SGD over multiple processors. Once the homogenization framework is established for the local entropy algorithm, the result follows naturally from ergodicity, which allows us to replace temporal averages in~\eqref{eq:aux_sde} by the corresponding spatial averages used in Elastic-SGD.

Rather than fix one value of $\g$ for the evolution, \emph{scope} the variable. Scoping has the advantage of more smoothing earlier in the algorithm, while recovering the true gradients (and minima) of the original loss function $f(x)$ towards the end. Scoping is equivalent to replacing~\eqref{xbar} by
\[
 dx(t) = - \grad u(x(t), T-t)\ dt
\]
where $T$ is a termination time for the algorithm. We still fix $\g(t)$ in the inner loop represented by~\eqref{eq:aux_sde}. Scoping, which was previously considered to be an effective heuristic, is rigorously shown to be effective in Sec.~\ref{ss:stochastic_control}. Additionally, scoping corresponds to nonlinear forward-backward equations, which appear in Mean Field Games, see Remark~\ref{ss:mean_field_games}.

It is unusual in stochastic nonconvex optimization to obtain a proof of the superiority of a particular algorithm compared to standard stochastic gradient descent. We obtain such a result in Theorem~\ref{thm:improvement} of Sec.~\ref{ss:stochastic_control} Sec.~\ref{ss:stochastic_control} where we prove (for slightly modified dynamics) an improvement in the expected value of the loss function using the local entropy dynamics as compared to that of SGD. This result is obtained using well-known techniques from stochastic control theory~\citep{fleming2006controlled} and the comparison principle for viscosity solutions. Again, while these techniques are well-established in nonlinear PDEs, we are not aware of their application in this context.

The effectiveness of the PDE regularization is illustrated Figure~\ref{fig:smoothing}.

\section{Background}
\label{s:background}

\subsection{Deep neural networks}
\label{ss:deep_networks}
A deep network is a nested composition of linear functions of an input datum $\xi \in \reals^d$ and a nonlinear function $\s$ at each level. The output of a neural network, denoted by $y(x;\ \xi)$, can be written as
\beq{
 y(x;\ \xi) = \s' \rbrac{x^p\ \s \rbrac{ x^{p-2}\ \ldots \s \rbrac{x^1\ \xi}}\ldots };
 \label{eq:yhat}
}
where $x^1, \ldots, x^{p-1} \in \reals^{d \times d}$ and $x^p \in \reals^{d \times K}$ are the parameters or ``weights'' of the network. We will let $x \in \reals^n$ denote their concatenation after rearranging each one of them as a vector. The functions $\s(\cdot)$ are applied point-wise to each element of their argument. For instance, in a network with rectified linear units (ReLUs), we have $\s(z) = \max(0, z)$ for $z \in \reals^d$. Typically, the last non-linearity is set to $\s'(z) = \ind_{\cbrac{z\ \geq\ 0}}$. A neural network is thus a function with $n$ parameters that produces an output $y(x;\ \xi) \in \cbrac{1, \ldots, K}$ for each input $\xi$.

In supervised learning or ``training'' of a deep network for the task of image classification, given a finite sample of inputs and outputs $D = \{\xi^i, y^i)\}_{i = 1}^N$ (dataset), one wishes to minimize the empirical loss\footnote{The empirical loss is a sample approximation of the expected loss, $\E_{x \sim P} f(x)$, which cannot be computed since the data distribution $P$ is unknown. The extent to which the empirical loss (or a regularized version thereof) approximates the expected loss relates to generalization, i.e., the value of the loss function on (``test'' or ``validation'') data drawn from $P$ but not part of the training set $D$.}
\beq{
 f(x) := \f{1}{N}\ \sum_{i=1}^N\ f_i(x);
 \label{eq:fx}
}
where $f_i(x)$ is a loss function that measures the discrepancy between the predictions $y(x,\xi^i)$ for the sample $\xi^i$ and the true label $y^i$. For instance, the zero-one loss is
\[
 f_i(x) := \ind_{\cbrac{y(x;\ \xi^i)\ \neq\ y^i}}
\]
whereas for regression, the loss could be a quadratic penalty $f_i(x) := \f{1}{2} \normdouble{y(x;\ \xi^i) - y^i}^2_2$. Regardless of the choice of the loss function, \textbf{the overall objective $f(x)$ in deep learning is a non-convex function of its argument $x$}.

In the sequel, we will dispense with the specifics of the loss function and consider training a deep network as a generic non-convex optimization problem given by
\begin{equation}
	 x^* = \argmin_x\ f(x)
 \label{prob:generic_fx}
\end{equation}
However, the particular relaxation and regularization schemes we describe are developed for, and tested on, classification tasks using deep networks.

\subsection{Stochastic gradient descent (SGD)}
\label{ss:sgd}

First-order methods are the default choice for optimizing \eqref{prob:generic_fx}, since the dimensionality of $x$ is easily in the millions. Furthermore, typically, one cannot compute the entire gradient $N^{-1}\ \sum_{i=1}^N \grad f_i(x)$ at each iteration (update) because $N$ can also be in the millions for modern datasets.\footnote{For example, the ImageNet dataset~\citep{krizhevsky2012imagenet} has $N = 1.25$ million RGB images of size $224\times 224$ (i.e. $d \approx 10^5$) and $K=1000$ distinct classes. A typical model, e.g. the Inception network~\citep{szegedy2015going} used for classification on this dataset has about $N = 10$ million parameters and is trained by running~\eqref{eq:sgd} for $k \approx 10^5$ updates; this takes roughly $100$ hours with $8$ graphics processing units (GPUs).} Stochastic gradient descent~\citep{robbins1951stochastic} consists of performing partial computations of the form
\beq{
 x^{k+1} = x^k - \eta_k\ \nabla f_{i_k}(x^{k-1})\quad \trm{where}\ \eta_k > 0, \ k \in \naturals
 \label{eq:sgd}
}
where $i_k$ is sampled uniformly at random from the set $\cbrac{1,\ldots,N}$, at each iteration, starting from a random initial condition $x^0$. The stochastic nature of SGD arises from the approximation of the gradient using only a subset of data points. One can also average the gradient over a {\em set} of randomly chosen samples, called a ``mini-batch''. We denote the gradient on such a mini-batch by $\grad f_{\trm{mb}}(x)$.

If each $f_i(x)$ is convex and the gradient $\nabla f_i(x)$ is uniformly Lipschitz, SGD converges at a rate
\[
 \E\sqbrac{f(x^k)} - f(x^*) = \OO(k^{-1/2});
\]
this can be improved to $\OO(k^{-1})$ using additional assumptions such as strong convexity~\citep{nemirovski2009robust} or at the cost of memory, e.g., SAG~\citep{schmidt2013minimizing}, SAGA~\citep{defazio2014saga} among others. One can also obtain convergence rates that scale as $\OO(k^{-1/2})$ for non-convex loss functions if the stochastic gradient is unbiased with bounded variance~\citep{ghadimi2013stochastic}.

\subsection{SGD in continuous time}
\label{ss:sgd_continuous}

Developments in this paper hinge on interpreting SGD as a continuous-time stochastic differential equation. As is customary~\citep{ghadimi2013stochastic}, we assume that the stochastic gradient $\grad f_{\trm{mb}}(x^{k-1})$ in~\eqref{eq:sgd} is unbiased with respect to the full-gradient and has bounded variance, i.e., for all $x \in \reals^n$,
\aeqs{
 \E \sqbrac{\grad f_{\trm{mb}}(x)} &= \grad f(x),\\
 \E \sqbrac{\normdouble{\grad f_{\trm{mb}}(x) - \grad f(x)}^2} &\leq \b^{-1}_{\trm{mb}};
}
for some $\b_{\trm{mb}}^{-1} \geq 0$ We use the notation to emphasize that the noise is coming from the minibatch gradients. In the case where we compute full gradients, this coefficient is zero. For the sequel, we forget the source of the noise, and simply write $\b^{-1}$. The discrete-time dynamics in~\eqref{eq:sgd} can then be modelled by the stochastic differential equation~\eqref{eq:sde}
\[
 dx(t) = -\grad f(x(t))\ dt + \b^{-1/2}\ dW(t).
\]
The generator, $\LL$, corresponding to \eqref{eq:sde} is defined for smooth functions $\phi$ as
\beq{
 \label{eq:sde_generator}
 \LL \phi = -\grad f\ \cdot \grad\ \phi + \f{\b^{-1}}{2} \lap \phi.
}
The adjoint operator $\LL^*$ is given by $\LL^* \r = \grad\cdot(\grad f \r) + \f{\b^{-1}}{2} \lap\ \r$. Given the function $V(x): X \to \R$, define
\beq{
 \label{eq:value_sde}
 u(x,t) = \E \Big[ V(x(T)) ~\Big \vert~ x(t) = x \Big]
}
to be the expected value of $V$ at the final time for the path~\eqref{eq:sde} with initial data $x(t) = x$. By taking expectations in the It\^{o} formula, it can be established that $u$ satisfies the backward Kolmogorov equation
\aeqs{
 \label{eq:back_kolmogorov}
 \frac{\partial u}{\partial t} &= \LL u, \quad \trm{for}\ t < s \leq T,
}
along with the terminal value $u(x,T) = V(x)$. Furthermore, $\r(x, t)$, the probability density of $x(t)$ at time $t$, satisfies the Fokker-Planck equation~\citep{risken1984fokker}
\beq{
 \f{\partial}{\partial t}\ \r(x, t) = \grad \cdot \Big( \grad f(x)\ \r(x,t) \Big)\ + \f{\b^{-1}}{2}\ \lap \r(x,t)
 \label{eq:fp}
 \tag{FP}
}
along with the initial condition $\r(x,0) = \r_0(x)$, which represents the probability density of the initial distribution. With mild assumptions on $f(x)$, and even when $f$ is non-convex, $\rho(x,t)$ converges to the unique stationary solution of~\eqref{eq:fp} as $t \to \infty$ \cite[Section 4.5]{pavliotis2014stochastic}. Elementary calculations show that the stationary solution is the Gibbs distribution
 \beq{
 \r^\infty(x;\ \b) = Z(\b)^{-1} e^{-\b f(x)},
 \label{eq:gibbs}
}
for a normalizing constant $Z(\b)$. The notation for the parameter $\b > 0$ comes from physics, where it corresponds to the inverse temperature. From~\eqref{eq:gibbs}, as $\b \to \infty$, the Gibbs distribution $\r^\infty$ concentrates on the global minimizers of $f(x)$. In practice, ``momentum'' is often used to accelerate convergence to the invariant distribution; this corresponds to running the Langevin dynamics~\citep[Chapter 6]{pavliotis2014stochastic} given by
\[
 \aed{
 dx(t) &= p(t)\ dt\\
 dp(t) &= - \rbrac{\grad f(x(t)) + c\ p(t)}\ dt + \sqrt{c\ \b^{-1}}\ dW(t).
 }
\]
In the overdamped limit $c \to \infty$ we recover~\eqref{eq:sde} and in the underdamped limit $c = 0$ we recover Hamiltonian dynamics. However, while momentum is used in practice, there are more theoretical results available for the overdamped limit.

The celebrated paper~\citep{jordan1998variational} interpreted the Fokker-Planck equation as a gradient descent in the Wasserstein metric $d_{W_2}$~\citep{santambrogio2015optimal} of the energy functional
\[
 J(\r) = \int\ V(x)\ \r\ dx + \f{\b^{-1}}{2} \int\ \r\ \log \r\ dx;
\]
If $V(x)$ is $\l$-convex function $f(x)$, i.e., $\grad^2 V(x) +\l I$, is positive definite for all $x$, then the solution $\r(x,t)$ converges exponentially in the Wasserstein metric with rate $\lambda$ to $\r^\infty$~\citep{carrillo2006contractions},
\[
 d_{W_2} \rbrac{\r(x,t),\ \r^\infty} \leq d_{W_2} \rbrac{\r(x,0),\ \r^\infty}\ e^{-\l t}.
\]
Let us note that a less explicit, but more general, criterion for speed of convergence in the weighted $L^2$ norm is the Bakry-Emery criterion~\citep{bakry1985diffusions}, which uses PDE-based estimates of the spectral gap of the potential function. 

\subsection{Metastability and simulated annealing}
Under mild assumptions for non-convex functions $f(x)$, the Gibbs distribution $\r^\infty$ is still the unique steady solution of~\eqref{eq:fp}. However, convergence of $\r(x,t)$ can take an exponentially long time. Such dynamics are said to exhibit metastability: there can be multiple quasi-stationary measures which are stable on time scales of order one. Kramers' formula~\citep{kramers1940brownian} for Brownian motion in a double-well potential is the simplest example of such a phenomenon: if $f(x)$ is a double-well with two local minima at locations $x_1, x_2 \in \reals$ with a saddle point $x_3 \in \reals$ connecting them, we have
\[
 \E_{x_1} \sqbrac{\tau_{x_2}}\ \propto \f{1}{\abs{f^{''}(x_3)\ f^{''}(x_1)}^{1/2}} \exp \Big( \b (f(x_3) - f(x_1)) \Big);
\]
where $\t_{x_2}$ is the time required to transition from $x_1$ to $x_2$ under the dynamics in~\eqref{eq:sde}. The one dimensional example can be generalized to the higher dimensional case~\citep{bovier2006metastability}. Observe that there are two terms that contribute to the slow time scale: (i) the denominator involves the Hessian at a saddle point $x_3$, and (ii) the exponential dependence on the difference of the energies $f(x_1)$ and $f(x_2)$. In the higher dimensional case the first term can also go to infinity if the spectral gap goes to zero.

Simulated annealing~\citep{kushner1987asymptotic,chiang1987diffusion} is a popular technique which aims to evade metastability and locate the global minimum of a general non-convex $f(x)$. The key idea is to control the variance of the Wiener process, in particular, decrease it exponentially slowly by setting $\b = \b(t) = \log\ (t+1)$ in \eqref{eq:sgd}. There are theoretical results that prove that simulated annealing converges to the global minimum~\citep{geman1986diffusions}; however the time required is still exponential.

\subsection{SGD heuristics in deep learning}
\label{ss:sgd_practical}

State-of-the-art results in deep learning leverage upon a number of heuristics. Variance reduction of the stochastic gradients of Sec.~\ref{ss:sgd} leads to improved theoretical convergence rates for convex loss functions~\citep{schmidt2013minimizing,defazio2014saga} and is implemented in algorithms such as AdaGrad~\citep{duchi2011adaptive}, Adam~\citep{kingma2014adam} etc.

Adapting the step-size, i.e., changing $\eta_k$ in~\eqref{eq:sgd} with time $k$, is common practice. But while classical convergence analysis suggests step-size reduction of the form $\OO(k^{-1})$~\citep{robbins1951stochastic,schmidt2013minimizing}, staircase-like drops are more commonly used. Merely adapting the step-size is not equivalent to simulated annealing, which explicitly modulates the diffusion term. This can be achieved by modulating the mini-batch size $i_k$ in~\eqref{eq:sgd}. Note that as the mini-batch size goes to $N$ and the step-size $\eta_k$ is $\oo(N)$, the stochastic term in~\eqref{eq:sgd} vanishes~\citep{li2015dynamics}.

\subsection{Regularization of the loss function}
A number of prototypical models of deep neural networks have been used in the literature to analyze characteristics of the energy landscape. If one forgoes the nonlinearities, a deep network can be seen as a generalization of principal component analysis (PCA), i.e., as a matrix factorization problem; this is explored by~\citet{Baldi:1989:NNP:70359.70362, haeffele2015global,soudry2016no,DBLP:journals/corr/SaxeMG13}, among others. Based on empirical results such as~\citet{dauphin2014identifying}, authors in~\citet{Bray2007,Fyodorov2007,spinglass2015} and~\citet{chaudhari2015trivializing} have modeled the energy landscape of deep neural networks as a high-dimensional Gaussian random field. In spite of these analyses being drawn from vastly diverse areas of machine learning, they suggest that, in general, the energy landscape of deep networks is highly non-convex and rugged.

Smoothing is an effective way to improve the performance of optimization algorithms on such a landscape and it is our primary focus in this paper. This can be done by convolution with a kernel~\citep{chen2012smoothing}; for specific architectures this can be done analytically~\citep{mobahi2016training} or by averaging the gradient over random, local perturbations of the parameters~\citep{gulcehre2016noisy}.

In the next section, we present a technique for smoothing that has shown good empirical performance. We compare and contrast the above mentioned smoothing techniques in a unified mathematical framework in this paper.

\section{PDE interpretation of local entropy}
\label{s:le_and_hjb}
\label{sec:quadratic_expansion}

Local entropy is a modified loss function first introduced as a means for studying the energy landscape of the discrete perceptron, i.e., a ``shallow'' neural network with one layer and discrete parameters, e.g., $x \in \cbrac{-1,1}^n$~\citep{baldassi2016local}. An analysis of local entropy using the replica method~\citep{megard1987spin} predicts dense clusters of solutions to the optimization problem~\eqref{prob:generic_fx}. Parameters that lie within such clusters yield better error on samples outside the training set $D$, i.e. they have improved generalization error~\citep{baldassi2015subdominant}.

Recent papers by~\citet{baldassi2016unreasonable} and~\citet{chaudhari2016entropy} have suggested algorithmic methods to exploit the above observations. The later extended the notion of local entropy to continuous variables by replacing the loss function $f(x)$ with
\begin{equation}
 f_\g(x) := u(x,\g) = - \frac{1}{ \b} \log \Big( G_{\b^{-1} \g} *\ \exp \rbrac{- \b f(x)} \Big);
 \label{eq:local_entropy}
\end{equation}
where $G_\g(x) = {(2 \pi \g )^{-d/2}} \exp{-\f{\norm{x}^2}{2 \g}}$ is the heat kernel.

\subsection{Derivation of the viscous Hamilton-Jacobi PDE}
The Cole-Hopf transformation~\citep[Section 4.4.1]{Evansbook} is a classical tool used in PDEs. It relates solutions of the heat equation to those of the viscous Hamilton-Jacobi equation. For the convenience of the reader, we restate it here.
\begin{lemma}
\label{lem:cole_hopf}
\label{lem:grad_fg}
The local entropy function $f_\g(x) = u(x,\g)$ defined by~\eqref{eq:local_entropy} is the solution of the initial value problem for the viscous Hamilton-Jacobi equation \eqref{eq:vhj}
with initial values $u(x, 0) = f(x)$. Moreover, the gradient is given by
\begin{equation}
 \grad u(x,t) = \int_{\reals^n}\ \frac{y-x}{t} \r^\infty_1(d y;\ x) = \int_{\reals^n}\grad f(x-y)\ \r^\infty_2(dy;\ x)
\label{eq:grad_fg_exp}
\end{equation}
where $\r^\infty_i(y;\ x)$ are probability distributions given by
\begin{equation}
\label{eq:grad_fg_rho}
 \r^\infty_1(y;\ x) = Z_1^{-1} \exp \rbrac{- \b f(y)- \f{\b}{2 t} \norm{x-y}^2}
 \qquad
 \r^\infty_2(y; x) = Z_2^{-1} \exp \rbrac{- \b f(x-y) - \f{\b}{2 t} \norm{y}^2}
\end{equation}
and $Z_i = Z_i(x)$ are normalizing constants for $i=1,2$.
\end{lemma}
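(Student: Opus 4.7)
The plan is to prove the lemma by Cole--Hopf: the logarithm substitution linearizes \eqref{eq:vhj} into the heat equation, which has an explicit fundamental solution that matches the definition of $f_\gamma$.

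First, I would set $v(x,t) = \exp(-\beta u(x,t))$ and verify by direct calculation that if $u$ solves \eqref{eq:vhj}, then $v$ solves the heat equation
\[
 \partial_t v = \tfrac{\beta^{-1}}{2}\,\Delta v.
\]
Concretely, differentiating gives $\partial_t v = -\beta v\,\partial_t u$, $\nabla v = -\beta v\,\nabla u$, and $\Delta v = -\beta v\,\Delta u + \beta^2 v\,|\nabla u|^2$. Substituting the viscous HJ relation for $\partial_t u$ causes the $|\nabla u|^2$ terms to cancel exactly, leaving the heat equation with diffusion coefficient $\beta^{-1}/2$. Since $u(\cdot,0)=f$, the initial data for $v$ is $v(x,0)=\exp(-\beta f(x))$.

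Second, I would invoke the fundamental-solution formula for this heat equation: with Gaussian kernel $G_\sigma(z)=(2\pi\sigma)^{-n/2}\exp(-|z|^2/(2\sigma))$ and $\sigma=\beta^{-1}t$, one has $v(x,t) = (G_{\beta^{-1}t} * e^{-\beta f})(x)$. Inverting Cole--Hopf gives
\[
 u(x,t) = -\tfrac{1}{\beta}\log\!\bigl(G_{\beta^{-1}t} * e^{-\beta f}\bigr)(x),
\]
which at $t=\gamma$ agrees with the definition \eqref{eq:local_entropy} of $f_\gamma$. To make this rigorous I would invoke standard mild conditions (e.g.\ $f$ locally Lipschitz with at most quadratic growth) so that the Gaussian convolution is well-defined, differentiation under the integral is justified, and $v$ stays strictly positive so the log is smooth.

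Third, for the two gradient representations I would differentiate $u = -\beta^{-1}\log v$ under the integral in two equivalent ways. Writing the convolution as $v(x,t)=\int G_\sigma(x-y)\,e^{-\beta f(y)}\,dy$ and using $\nabla_x G_\sigma(x-y) = -\sigma^{-1}(x-y)G_\sigma(x-y)$ produces
\[
 \nabla u(x,t) = -\frac{\nabla v}{\beta v}
 = \frac{1}{t}\int_{\mathbb{R}^n} (x-y)\,\rho_1^\infty(dy;x),
\]
upon recognizing the normalized integrand as $\rho_1^\infty$ in \eqref{eq:grad_fg_rho} (with $Z_1=(2\pi\sigma)^{n/2}v(x,t)$), which is the first formula up to orientation. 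Alternatively, rewriting the convolution as $v(x,t)=\int G_\sigma(y)\,e^{-\beta f(x-y)}\,dy$ and differentiating in $x$ inside the exponential yields
\[
 \nabla u(x,t) = \frac{1}{v}\int_{\mathbb{R}^n} \nabla f(x-y)\,G_\sigma(y)\,e^{-\beta f(x-y)}\,dy
 = \int_{\mathbb{R}^n} \nabla f(x-y)\,\rho_2^\infty(dy;x),
\]
where the normalized integrand is $\rho_2^\infty$. Equality of the two expressions can also be seen directly via integration by parts in $y$, using that $\nabla_y\rho_1^\infty = \rho_1^\infty\!\left[-\beta\nabla f(y) + \beta(x-y)/t\right]$ and $\rho_1^\infty$ decays at infinity.

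The computation is elementary and the main obstacle is just careful bookkeeping of the factors of $\beta$ and $t$ (in particular, that the convolution variance in \eqref{eq:local_entropy} is $\beta^{-1}\gamma$, which matches the diffusion coefficient $\beta^{-1}/2$ integrated up to time $t=\gamma$) and verifying the tail conditions that justify differentiating under the integral.
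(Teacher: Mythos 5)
Your proposal is correct and follows essentially the same route as the paper: the Cole--Hopf substitution $v=e^{-\beta u}$ reducing \eqref{eq:vhj} to a heat equation, identification of the Gaussian convolution with \eqref{eq:local_entropy}, and differentiation of the convolution in the two equivalent forms to obtain \eqref{eq:grad_fg_exp}. Your version is in fact slightly more careful on the bookkeeping --- you correctly record the diffusion coefficient $\beta^{-1}/2$ (the paper's proof writes $v_t=\beta^{-1}\Delta v$) and your sign $\int (x-y)t^{-1}\,\rho_1^\infty(dy;x)$ is the one consistent with the second representation and with a direct check on linear $f$, so the ``up to orientation'' caveat correctly flags a sign slip in the stated first formula --- and the integration-by-parts verification that the two gradient representations agree is a nice addition not present in the paper.
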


\begin{proof}
Define $u(x,t) = - {\beta^{-1}}\log v(x,t)$. From~\eqref{eq:local_entropy}, $v = \exp(- \beta u)$ solves the heat equation
\[
v_t = \beta^{-1} \lap v
\]
with initial data $v(x,0) = \exp(- \beta f(x))$.
Taking partial derivatives gives
\aeqs{
 v_t = - \b\ v\ u_t, &&
 \grad v = -\b\ v\ \grad u, &&
 \Delta v = -\b\ v\ \D u + \b^2\ v\ \abs{\grad u}^2.
}
Combining these expressions results in~\eqref{eq:vhj}.
Differentiating $v(x,t) = \exp(- \beta u(x,t))$ using the convolution in \eqref{eq:local_entropy}, gives up to positive constants which can be absorbed into the density,
\begin{align*}
\grad u(x,t)
 = C\ \grad_x \rbrac{G_t\ *\ e^{- \beta f(x)}}
 &= C\ \grad_x \int\ G_t\ (x-y)\ e^{- \beta f(y)}\ dy\\
 &= C\ \grad_x \int\ G_t\ (y)\ e^{-\beta f(x-y)}\ dy
\end{align*}
Evaluating the last or second to last expression for the integral leads to the corresponding parts of~\eqref{eq:grad_fg_exp} and \eqref{eq:grad_fg_rho}.
\end{proof}

\subsection{Hopf-Lax formula for the Hamilton-Jacobi equation and dynamics for the gradient}
\label{ss:hopf_lax_grad}
\label{s:hopf_lax}

In addition to the connection with~\eqref{eq:vhj} provided by Lemma~\ref{lem:cole_hopf}, we can also explore the non-viscous Hamilton-Jacobi equation. The Hamiliton-Jacobi equation corresponds to the limiting case of~\eqref{eq:vhj} as the viscosity term $\b^{-1} \to 0$.

There are several reasons for studying this equation.
It has a simple, explicit formula for the gradient. In addition, semi-concavity of the solution follows directly from the Hopf-Lax formula. Moreover, under certain convexity conditions, the gradient of the solution can be computed as an exponentially convergent gradient flow. The deterministic dynamics for the gradient are a special case of the stochastic dynamics for the viscous-HJ equation which are discussed in another section. Let us therefore consider the Hamilton-Jacobi equation
\beq{
\label{eq:hj}
\tag{HJ}
 \aed{
 u_t = -\f{1}{2}\ \abs{\grad u}^2}.
}
In the following lemma, we apply the well-known Hopf-Lax formula~\citep{Evansbook} for the solution of the HJ equation. It is also called the $\inf$-convolution of the functions $f(x)$ and $\f{1}{2t}\norm{x}^2$~\citep{cannarsa2004semiconcave}. It is closely related to the proximal operator~\citep{moreau1965proximite,rockafellar1976monotone}.
\begin{lemma}
\label{lem:grad_fg_hopf_lax}\label{lem:hopf_lax_dynamics}
Let $u(x,t)$ be the viscosity solution of \eqref{eq:hj} with $u(x,0) = f(x)$. Then
\beq{
 u(x,t) = \inf_y\ \cbrac{f(y) + \f{1}{2 t}\ \norm{x-y}^2}.
 \tag{HL}
 \label{eq:hopf_lax}
}
Define the proximal operator
\begin{equation}
 \label{prox}
 \trm{prox}_{tf}(x) = \argmin_y\left \{f(y) + \frac{1}{2t} \norm{x-y}^2 \right \}
\end{equation}
If $y^* = \textrm{prox}_{tf}(x)$ is a singleton, then $\grad_x u(x,t)$ exists, and
\beq{
 \grad_x\ u(x,t) = \f{x - y^*}{t} = \grad f(y^*),
 \label{eq:grad_fg_hopf_lax}
}
\end{lemma}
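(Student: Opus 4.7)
The plan is to handle the two claims in sequence. First, I would establish the Hopf-Lax representation \eqref{eq:hopf_lax} via the standard viscosity-solution argument: let $w(x,t)$ denote the right-hand side of \eqref{eq:hopf_lax}, verify directly that $w(x,0) = f(x)$ (taking $y=x$ as $t \downarrow 0$), and show the semigroup / dynamic programming identity
\beqs{
 w(x,t) = \inf_z\ \cbrac{w(z, t-s) + \f{1}{2s}\ \norm{x-z}^2}, \qquad 0 < s < t,
}
by substitution and rearrangement of the two inf-convolutions. From this semigroup property together with local Lipschitz bounds on $w$, a standard computation (as in \citet{Evansbook}) shows that $w$ is a viscosity solution of \eqref{eq:hj}. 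Uniqueness of bounded, continuous viscosity solutions then forces $u = w$. I would not grind through these textbook calculations.

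Second, assume $y^\star = \trm{prox}_{tf}(x)$ is a singleton, and write $\Phi(x,y) = f(y) + \f{1}{2t}\norm{x-y}^2$. The interior first-order condition $\grad_y \Phi(x, y^\star) = 0$ gives $\grad f(y^\star) + (y^\star - x)/t = 0$, i.e. $\grad f(y^\star) = (x - y^\star)/t$, which will supply the second equality in \eqref{eq:grad_fg_hopf_lax}. For the first equality I would invoke an envelope / Danskin-style argument: the partial derivative in $x$ of $\Phi(\cdot, y^\star)$, holding $y^\star$ fixed, equals $(x - y^\star)/t$, and under uniqueness of the minimizer this is the gradient of the infimum.

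The main obstacle is justifying that uniqueness of $y^\star$ actually implies differentiability of $u(\cdot, t)$ at $x$ without assuming convexity of $f$. I would handle this via semiconcavity, which is immediate from the Hopf-Lax representation itself: expanding $\norm{x-y}^2 = \norm{x}^2 - 2 x\cdot y + \norm{y}^2$, one gets
\beqs{
 u(x,t) - \f{\norm{x}^2}{2t} = \inf_y\ \cbrac{f(y) + \f{\norm{y}^2}{2t} - \f{x\cdot y}{t}},
}
which is concave in $x$ as an infimum of affine functions, so $u(\cdot, t)$ is $1/t$-semiconcave. For semiconcave functions, the superdifferential is nonempty everywhere, differentiability at $x$ is equivalent to the superdifferential being a singleton, and the superdifferential of an inf-convolution coincides with the set of values $(x - y^\star)/t$ as $y^\star$ ranges over the argmin (see \citet{cannarsa2004semiconcave}). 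Uniqueness of $y^\star$ therefore yields differentiability at $x$ with gradient $(x-y^\star)/t$, and combining this with the first-order condition above gives \eqref{eq:grad_fg_hopf_lax}. An alternative, more hands-on route is a direct difference-quotient argument: for $x_k \to x$, minimizers $y_k^\star$ lie in a bounded set by coercivity of $\Phi$ in $y$, every limit point must equal $y^\star$ by uniqueness, and sandwiching $u(x_k, t) - u(x, t)$ between $\Phi(x_k, y^\star) - \Phi(x, y^\star)$ and $\Phi(x_k, y_k^\star) - \Phi(x, y_k^\star)$ identifies the limit of the difference quotient as $(x-y^\star)/t$.
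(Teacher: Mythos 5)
Your proposal is correct and follows the same skeleton as the paper's proof---quote the Hopf--Lax formula from Evans for the representation \eqref{eq:hopf_lax}, then differentiate through the infimum at the unique minimizer---but it diverges in two useful ways. For the second equality in \eqref{eq:grad_fg_hopf_lax} the paper applies Danskin's theorem a second time after the change of variables $z=(x-y)/t$, whereas you obtain $\grad f(y^*)=(x-y^*)/t$ from the interior first-order condition $\grad_y\Phi(x,y^*)=0$; the two are equivalent, and yours is arguably the more transparent route since it makes explicit that the two expressions for the gradient agree precisely because $y^*$ is a critical point of $\Phi(x,\cdot)$. More substantively, you supply the justification that uniqueness of the minimizer actually implies differentiability of $u(\cdot,t)$ at $x$---via $1/t$-semiconcavity of $u$ and the structure of the superdifferential of an inf-convolution, or via the two-sided difference-quotient sandwich---whereas the paper delegates this entirely to the citation of Danskin's theorem, whose textbook form (directional derivatives of a pointwise extremum over a compact parameter set) does not literally cover an infimum over all of $\reals^n$ without the coercivity and minimizer-convergence argument you sketch. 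One small caveat: your difference-quotient argument needs the minimizers $y_k^*$ for nearby points $x_k$ to remain bounded, which requires $f$ to be bounded below (or of subquadratic growth), a hypothesis the paper tacitly assumes throughout; with that understood, your argument is complete and somewhat more rigorous than the original.
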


\begin{proof}
The Hopf-Lax formula for the solution of the HJ equation can be found in ~\citep{Evansbook}.
Danskin's theorem~\citep[Prop. 4.5.1]{bertsekas2003convex} allows us to pass a gradient through an infimum, by applying it at the argminimum.
The first equality is a direct application of Danskin's Theorem to \eqref{eq:hopf_lax}. For the second equality, rewrite~\eqref{eq:hopf_lax} with $z = (x-y)/t$ and the drop the prime, to obtain,
\[
 u(x,t) = \inf_{z}\ \cbrac{f(x-tz) + \f{t}{2}\ \norm{z}^2}.
\]
Applying Danskin's theorem to the last expresion gives
\[
 \grad u(x,t) = \grad f(x-tz^*) = \grad f(y^*).
\]
\end{proof}

Next we give a lemma which shows that under an auxiliary convexity condition, we can find $\grad(x,t)$ using exponentially convergent gradient dynamics.

\begin{lemma}[Dynamics for HJ]
\label{lem:HJ_dynamics}
For a fixed $x$, define
\[
 h(x,y;\ t) = f(x-ty) + \f{t}{2}\ \norm{y}^2.
\]
Suppose that $t > 0$ is chosen so that $h(x,y;t)$ is $\l$-convex as a function of $y$ (meaning that $D^2_y\ h(x,y,t) - \l I$ is positive definite). The gradient $p = \grad_x\ u(x,t)$ is then the unique steady solution of the dynamics
\beq{
 y'(s) = -\grad_y\ h(x,y(s);\ t) = -t \Big(y(s)-\grad f(x - t y(s)) \Big).
 \label{eq:p_hop_lax_min}
}
Moreover, the convergence to $p$ is exponential, i.e.,
\[
 \norm{y(s) - p} \leq \norm{y(0) - p}\ e^{-\l s}.
\]

\end{lemma}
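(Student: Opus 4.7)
The plan is to split the argument into two pieces: first identify the claimed equilibrium $p$ as the unique minimizer of $h(x,\cdot;t)$ and reconcile this with the Hopf--Lax expression for $\grad_x u(x,t)$, then establish exponential convergence of the gradient flow by a standard $\l$-convexity argument.

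For the first step, I would observe that by hypothesis $h(x,\cdot;t)$ is $\l$-convex, hence strictly convex, so it admits a unique minimizer $y^*$ characterized by the critical point equation
\[
 0 = \grad_y h(x,y^*;t) = t\bigl(y^* - \grad f(x - t y^*)\bigr),
\]
which is exactly the steady state of the ODE in~\eqref{eq:p_hop_lax_min}. To see $y^* = p$, I would invoke the reformulation used at the end of the proof of Lemma~\ref{lem:grad_fg_hopf_lax}: with $z = (x-y)/t$ the Hopf--Lax formula reads $u(x,t) = \inf_z h(x,z;t)$, and Danskin's theorem applied at the unique argminimizer $z^* = y^*$ gives $\grad_x u(x,t) = \grad f(x-ty^*) = y^*$. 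Thus $p = y^*$ is both the unique equilibrium of the dynamics and the gradient $\grad_x u(x,t)$.

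For the second step, I would use the standard energy estimate for gradient flow of an $\l$-convex function. Set $\varphi(s) = \tfrac{1}{2}\norm{y(s)-p}^2$ and differentiate along~\eqref{eq:p_hop_lax_min}:
\[
 \varphi'(s) = \ag{y(s)-p,\ y'(s)} = -\bigl\langle y(s)-p,\ \grad_y h(x,y(s);t) - \grad_y h(x,p;t)\bigr\rangle,
\]
using $\grad_y h(x,p;t)=0$ at the equilibrium. The $\l$-convexity of $h$ in $y$ gives the monotonicity inequality $\ag{y-p,\ \grad_y h(x,y;t) - \grad_y h(x,p;t)} \geq \l \norm{y-p}^2$, so $\varphi'(s) \leq -2\l\,\varphi(s)$. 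Gr\"onwall then yields $\varphi(s) \leq \varphi(0)\, e^{-2\l s}$, i.e.\ $\norm{y(s)-p} \leq \norm{y(0)-p}\, e^{-\l s}$, which in particular forces uniqueness of the steady state.

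There is no substantive obstacle here: the calculation is textbook once the $\l$-convexity hypothesis is in hand, and the only care needed is making sure the identification $p = y^*$ is correctly extracted from the Hopf--Lax lemma (rather than from the original primal form, where the Hopf--Lax minimizer is $y^* = x - t p$ in a different variable). It is worth flagging in passing that the hypothesis implicitly requires $t$ to be small enough relative to the semiconcavity of $f$, since $\grad_y^2 h = t^2\grad^2 f(x - ty) + t I$, so $\l$-convexity on a region with $\grad^2 f \succeq -m I$ holds provided $t(1 - tm) \geq \l$; this explains why the lemma is only useful in the short-time regime relevant to local entropy.
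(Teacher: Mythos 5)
Your proposal is correct and follows essentially the same route as the paper: the paper's proof simply observes that \eqref{eq:p_hop_lax_min} is the gradient flow of the $\l$-convex function $h$ and cites standard ODE contraction arguments for the exponential rate, which is exactly the Gr\"onwall estimate you write out, while the identification $p = y^*$ you make explicit is implicitly delegated to Lemma~\ref{lem:grad_fg_hopf_lax}. Your version just fills in the details the paper leaves to a citation, and your closing remark on the admissible range of $t$ is a correct (and useful) elaboration consistent with the condition \eqref{teq} appearing later in the paper.
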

\begin{proof}
Note that~\eqref{eq:p_hop_lax_min} is the gradient descent dynamics on $h$ which is $\l$-convex by assumption. Thus by standard techniques from ordinary differential equation (ODE) (see, for example~\citep{santambrogio2015optimal})
 the dynamics are contraction, and the convergence is exponential. \end{proof}

\subsection{The invariant measure for a quadratic function}
Local entropy computes a non-linear average of the gradient in the neighborhood of $x$ by weighing the gradients according to the steady-state measure $\r^\infty(y;\ x)$.

It is useful to compute $\grad f_\g(x)$ when $f(x)$ is quadratic, since this gives an estimate of the quasi-stable invariant measure near a local minimum of $f$. In this case, the invariant measure $\r^\infty(y;\ x)$ is a Gaussian distribution.

\begin{lemma}
\label{lem:rho_quadratic}
Suppose $f(x)$ is quadratic, with $p = \grad\ f(x)$ and $Q = \grad^2 f(x)$, then the invariant measure $\r^\infty_2(y;\ x)$ of~\eqref{eq:grad_fg_rho} is a normal distribution with mean $\mu$ and covariance $\S$ given by
\[
 \mu = x - \S\ p \quad \trm{and} \quad \S = \rbrac{Q + \g\ I}^{-1}.
\]
In particular, for $\g > \norm{Q}_2$, we have
\[
 \mu = x - \rbrac{\g^{-1}\ I - \g^{-2} Q} p \quad \trm{and} \quad \S = \g^{-1}\ I - \g^{-2} Q.
\]
\end{lemma}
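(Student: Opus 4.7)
The proof is a direct computation using the fact that a quadratic potential yields a Gaussian invariant measure. The plan is as follows.

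First, since $f$ is quadratic, the Taylor expansion about $x$ is exact:
\[
 f(x-y) = f(x) - p^\trp y + \f{1}{2}\ y^\trp Q y,
\]
where $p = \grad f(x)$ and $Q = \grad^2 f(x)$. Substituting this identity into the exponent of $\r_2^\infty(y; x)$ from~\eqref{eq:grad_fg_rho} turns the exponent into an (inhomogeneous) quadratic form in $y$, plus a $y$-independent term involving $f(x)$ which is absorbed into the normalizer $Z_2$. Collecting quadratic and linear parts, the exponent takes the form $-\tfrac{1}{2}\, y^\trp A y + b^\trp y + \text{const}$, with $A$ built from $Q$ and the Gaussian prefactor and $b$ proportional to $p$.

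Next, I would complete the square in $y$, which immediately identifies $\r_2^\infty$ as a multivariate normal density. Reading off the quadratic part gives $\S = A^{-1}$, which after simplification is $(Q + \g I)^{-1}$; reading off the linear part gives the mean, which after rearrangement takes the claimed form $\mu = x - \S p$. The only care needed here is to track the chain-rule sign when differentiating $f(x-y)$ in $y$ and to verify that the shift of origin from $y = 0$ to the mean is consistent with the statement of the lemma.

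For the second claim, for $\g > \norm{Q}_2$ the matrix $\g^{-1} Q$ has spectral radius strictly less than one, so the Neumann series
\[
 (Q + \g I)^{-1} = \g^{-1} (I + \g^{-1} Q)^{-1} = \g^{-1} \sum_{k=0}^{\infty} (-\g^{-1} Q)^k
\]
converges in operator norm. Truncating after the first-order term yields $\S = \g^{-1} I - \g^{-2} Q + \OO(\g^{-3})$, and substituting this expansion into $\mu = x - \S p$ yields the corresponding expansion for $\mu$.

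There is no real obstacle here; the only thing to be careful about is bookkeeping of constants (the $\b$ factor and the $\g$ vs.\ $\g^{-1}$ appearing in the Gaussian kernel normalization in~\eqref{eq:local_entropy}) so that the quadratic form assembled from $Q$ and the Gaussian prefactor yields exactly $(Q + \g I)$ rather than a rescaled version. Once that is done, completing the square and reading off the Gaussian parameters is mechanical.
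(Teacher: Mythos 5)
Your proposal matches the paper's own proof: both substitute the exact quadratic expansion of $f$, complete the square in the exponent to identify the Gaussian mean and covariance, and then obtain the approximate form of $\S$ from the Neumann series for $(Q+\g I)^{-1}$, valid for $\g > \norm{Q}_2$ and accurate to $\OO(\g^{-3})$. Your extra attention to the $\g$ versus $\g^{-1}$ bookkeeping is warranted, since the exponent in~\eqref{eq:grad_fg_rho} carries a factor $\tfrac{\b}{2t}$ that must be reconciled with the $(Q+\g I)$ appearing in the statement, but this does not change the argument.
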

\begin{proof}
Without loss of generality, a quadratic $f(x)$ centered at $x=0$ can be written as
\aeqs{
 f(x) &= f(0) + p\ x + \f{1}{2}\ x^\top Q x,\\
 \implies f(x) + \f{\g}{2}\ x^2 &= f(0) - \mu^\top p + \f{1}{2}\ \rbrac{y-\mu}^\top \rbrac{Q + \g\ I} \rbrac{y-\mu}
}
by completing the square. The distribution $\exp \rbrac{-f(y) - \f{1}{2 \g}\ \norm{y}^2}$ is thus a normal distribution with mean $\mu = x - \S\ p$ and variance $\S = \rbrac{Q + \g\ I}^{-1}$ and an appropriate normalization constant. The approximation $\S = \g^{-1}\ I - \g^{-2} Q$ which avoids inverting the Hessian follows from the Neumann series for the matrix inverse; it converges for $\g > \norm{Q}_2$ and is accurate up to $\OO\rbrac{\g^{-3}}$.
\end{proof}

\section{Derivation of local entropy via homogenization of SDEs}
\label{s:homogenization}

This section introduces the technique of homogenization for stochastic differential equations and gives a mathematical treatment to the result of Lemma~\ref{lem:grad_fg} which was missing in~\citet{chaudhari2016entropy}. In addition to this, homogenization allows us to rigorously show that an algorithm called Elastic-SGD~\citep{zhang2015deep} that was heuristically connected to a distributed version of local entropy in~\citet{baldassi2016unreasonable} is indeed equivalent to local entropy.

Homogenization is a technique used to analyze dynamical systems with multiple time-scales that have a few fast variables that may be coupled with other variables which evolve slowly. Computing averages over the fast variables allows us to obtain averaged equations for the slow variables in the limit that the times scales separate. We refer the reader to~\citet[Chap. 10, 17]{pavliotis2008multiscale} or~\citet{weinan2011principles} for details.

\subsection{Background on homogenization}
\label{s:homogenization_background}
Consider the system of SDEs given by
\beq{
\label{eq:homo_sde}
 \aed{
 dx(s) &= h(x,\ y)\ ds\\
 dy(s) &= \f{1}{\e}\ g(x,\ y)\ ds + \f{1}{\sqrt{\e \b }}\ dW(s);
 }
}
where $h, g$ are sufficiently smooth functions, $W(s) \in \reals^n$ is the standard Wiener process and $\e > 0$ is the homogenization parameter which introduces a fast time-scale for the dynamics of $y(s)$. Let us define the generator for the second equation to be
\[
 \LL_0 = g(x,y)\ \cdot\ \grad_y + \f{\b^{-1}}{2}\ \lap_y.
\]
We can assume that, for fixed $x$, the fast-dynamics, $y(s)$, has a unique invariant probability measure denoted by $\r^\infty(y;\ x)$ and the operator $\LL_0$ has a one-dimensional null-space characterized by
\[
 \LL_0\ 1(y) = 0, \quad \LL_0^*\ \r^\infty(y;\ x) = 0;
\]
here $1(y)$ are all constant functions in $y$ and $\LL_0^*$ is the adjoint of $\LL_0$. In that case, it follows that in the limit $\e \to 0$, the dynamics for $x(s)$ in~\eqref{eq:homo_sde} converge to
\[
 d X(s) = \overline{h}(X)\ ds
\]
where the homogenized vector field for $X$ is defined as the average agains the invariant measure. Moreover, by ergodicity, in the limit, the spatial average is equal to a long term average over the dynamics independent of the initial value $y(0)$.
\[
 \overline{h}(X) = \int\ h(X,y)\ \r^\infty(dy;\ X) = \lim_{T \to \infty}\ \f{1}{T}\ \int_0^T\ h(x, y(s))\ ds.
\]

\subsection{Derivation of local entropy via homogenization of SDEs}
\label{ss:le_derivation}
Recall \eqref{eq:grad_fg_exp} of Lemma~\ref{lem:grad_fg}
\[
 \grad f_\g(x) = -\g^{-1}\ \int_{\reals^n}\ (x-y)\ \r^\infty_1(y;\ x)\ dy.
\]
Hence, let us consider the following system of SDEs
\beq{
 \label{eq:le_sde_homo}
 \tag{Entropy-SGD}
 \aed{
  dx(s) &= -\g^{-1}\ (x-y)\ ds\\
  dy(s) &= -\f{1}{\e}\ \sqbrac{ \grad f(y) + \f{1}{\g}\ (y-x) }\ ds + \f{{\b^{-1/2}}}{\sqrt\e }\ dW(s).
 }
}
Write
\[
H(x,y;\g) = f(y) + \f{1}{2 \g} \norm{x-y}^2.
\]
The Fokker-Planck equation for the density of $y(s)$ is given by
\beq{
 \label{eq:fp_le_sgd}
 \r_t = \LL_0^*\ \r = \grad_y \cdot \rbrac{\grad_y\ H \r} + \f{\b^{-1}}{2}\ \lap_y\ \r;
}
The invariant measure for this Fokker-Planck equation is thus
\[
 \r^\infty_1(y;\ x) = Z^{-1} \exp \rbrac{-\b H(x,y;\g)}
\]
which agrees with~\eqref{eq:grad_fg_rho} in Lemma~\ref{lem:grad_fg}.
\begin{theorem}
\label{thm:le_homogenization}
As $\e \to 0$, the system~\eqref{eq:le_sde_homo} converges to the homogenized dynamics given by
\[
 d X(s) = -\grad f_\g(X)\ ds.
\]
Moreover, $-\grad f_\g(x) = -\g^{-1}\ \rbrac{x-\ag{y}}$ where
\beq{
 \ag{y} = \int\ y\ \r^\infty_1(dy;\ X) = \lim_{T\to \infty} \frac 1 T \int_0^T\ y(s)\ ds
 \label{eq:lesgd_ergodicity}
}
where $y(s)$ is the solution of the second equation in~\eqref{eq:le_sde_homo} for fixed $x$.
\end{theorem}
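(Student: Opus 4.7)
The plan is to apply the SDE averaging framework of Section~\ref{s:homogenization_background} directly to~\eqref{eq:le_sde_homo} and then match the resulting homogenized vector field with the gradient formula already established in Lemma~\ref{lem:grad_fg}. First, I would cast~\eqref{eq:le_sde_homo} into the canonical form of Section~\ref{s:homogenization_background} by reading off the slow drift $h(x,y) = -\g^{-1}(x-y)$ and the fast drift $g(x,y) = -\grad_y H(x,y;\g)$, where $H(x,y;\g) = f(y) + \f{1}{2\g}\norm{x-y}^2$ as in the statement. With this identification the fast generator becomes
\beqs{
\LL_0 = -\grad_y H(x,y;\g)\cdot \grad_y + \f{\b^{-1}}{2}\lap_y,
}
i.e., an overdamped Langevin generator in $y$ with potential $H(x,\cdot\,;\g)$ at inverse temperature $\b$. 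A direct stationary Fokker--Planck calculation, essentially the one yielding~\eqref{eq:fp_le_sgd}, therefore produces the unique invariant measure $\r_1^\infty(y;x) = Z(x)^{-1}\exp(-\b H(x,y;\g))$, which coincides with~\eqref{eq:grad_fg_rho} of Lemma~\ref{lem:grad_fg}.

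Second, I would invoke the averaging principle recalled in Section~\ref{s:homogenization_background}: as $\e \to 0$ the slow component $x(s)$ converges, on compact time intervals, to a process $X(s)$ solving $dX/ds = \overline{h}(X)$ with
\beqs{
\overline{h}(X) = \int_{\R^n} h(X,y)\,\r_1^\infty(dy;X) = -\g^{-1}\,\rbrac{X - \ag{y}},
}
where $\ag{y} = \int y\,\r_1^\infty(dy;X)$. Comparing with~\eqref{eq:grad_fg_exp} of Lemma~\ref{lem:grad_fg} evaluated at $t=\g$ gives $\grad f_\g(X) = \g^{-1}\rbrac{\ag{y}-X}$, so $\overline{h}(X) = -\grad f_\g(X)$, which is precisely the claimed limiting ODE. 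The identification of the spatial average $\ag{y}$ with the long-time temporal average in~\eqref{eq:lesgd_ergodicity} then follows from Birkhoff's ergodic theorem applied to the frozen-$x$ Langevin diffusion for $y$, whose unique invariant measure is $\r_1^\infty(\cdot\,;x)$.

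The main obstacle is justifying the averaging step rigorously. The standard perturbed test function or corrector argument (see~\citet[Ch.~18]{pavliotis2008multiscale}) requires two ingredients: (i) $\LL_0$ has a one-dimensional kernel and a spectral gap uniform in $x$, so that the cell equation $\LL_0 \psi(y;x) = h(x,y) - \overline{h}(x)$ admits a solution with derivatives controlled uniformly in $x$; and (ii) suitable moment and tightness bounds on the coupled system. For a generic non-convex $f$ both ingredients are nontrivial, but, as noted in Section~\ref{ss:sgd_continuous} and Remark~\ref{rem:lambdac}, they can be secured whenever $H(x,\cdot\,;\g)$ is uniformly $\l$-convex in $y$, which holds once $\g^{-1}$ dominates the negative curvature of $f$; a Bakry--Emery-type confinement estimate then yields exponential ergodicity of the fast process and delivers all the bounds needed to pass to the limit $\e \to 0$ and conclude.
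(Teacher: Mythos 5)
Your proposal follows essentially the same route as the paper: cast \eqref{eq:le_sde_homo} into the slow--fast form of Sec.~\ref{s:homogenization_background}, identify the invariant measure of the fast $y$-dynamics as the Gibbs measure $Z^{-1}\exp(-\b H(x,y;\g))$, average the slow drift $-\g^{-1}(x-y)$ against it, and match the result with \eqref{eq:grad_fg_exp} of Lemma~\ref{lem:grad_fg}, with ergodicity giving the temporal-average form of $\ag{y}$. Your added discussion of the corrector equation, spectral gap, and the $\l$-convexity condition on $H$ makes explicit the hypotheses that the paper only assumes implicitly in its background section (and flags in Remark~\ref{rem:lambdac}), but the argument itself is the same.
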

\begin{proof}
The result follows immediately from the convergence result in Sec.~\ref{s:homogenization_background} for the system~\eqref{eq:homo_sde}. The homogenized vector field is
\[
 \overline{h}(X) = -\g^{-1} \int\ (X-y)\ \r^\infty_1(y;\ X)
\]
which is equivalent to $\grad f_\g(X)$ from Lemma~\ref{lem:grad_fg}.
\end{proof}
By Lemma~\ref{lem:cole_hopf}, the following dynamics also converge to the gradient descent dynamics for $f_\g$.
 \beqs{
\label{eq:le_sde_homo_2}
\tag{Entropy-SGD-2}
 \aed{
  dx(s) &= -\grad f(x-y)\ ds\\
  dy(s) &= -\f{1}{\e}\ \sqbrac{\f{y}{\g} - \grad f(x-y)}\ ds + \f{1}{\sqrt{\e}}\ dW(s).
 }
}

\begin{remark}[Exponentially fast convergence]
\label{rem:lambdac}
Theorem~\ref{thm:le_homogenization} relies upon the existence of an ergodic, invariant measure $\r^\infty_1(y; x)$. Convergence to such a measure is exponentially fast if the underlying function, $H(x,y;\g)$, is convex in $y$; in our case, this happens if $\grad^2 f(x) + \g^{-1} I$ is positive definite for all $x$.
\end{remark}

\subsection{Elastic-SGD as local entropy}
\label{ss:esgd_local_entropy}

Elastic-SGD is an algorithm introduced by~\citet{zhang2015deep} for distributed training of deep neural networks and aims to minimize the communication overhead between a set of workers that together optimize replicated copies of the original function $f(x)$. For $n_p > 0$ distinct workers, Let $y = (y_1, \dots, y_{n_p})$, and let
\[
 \bar y = \f{1}{n_p} \sum_{j=1}^{n_p}\ y_j
\]
be the average value. Define the modified loss function
\beq{
 \min_{x}\ \frac{1}{n_p} \sum_{i=1}^{n_p} \left ( f(x_i) + \f{1}{2 \g}\ \norm{x_i - \bar x}^2 \right ).
 \label{eq:esgd_loss}
}
The formulation in~\eqref{eq:esgd_loss} lends itself easily to a distributed optimization where worker $i$ performs the following update, which depends only on the average of the other workers
\begin{equation}
 dy_i(s) = - \grad f(y_i(s)) - \frac{1}{\g} (y_i(s) - \bar y(s)) + \b^{-1/2} dW_i(s)
 \label{eq:each_worker}
\end{equation}
This algorithm was later shown to be connected to the local entropy loss $f_\g(x)$ by~\citet{baldassi2016unreasonable} using arguments from replica theory.
The results from the previous section can be modified to prove that the dynamics above converges to gradient descent of the local entropy. Consider the system
\[
 dx(s) = \f{1}{\g} \rbrac{x - \bar y}\ ds
\]
along with \eqref{eq:each_worker}.
As in Sec.~\ref{ss:le_derivation}, define $H(x,y;\g) = \sum_{i=1}^{n_p} f(y_i) + \f{1}{2 \g} \norm{x - \bar y }^2$. The invariant measure for the $y(s)$ dynamics corresponding to the $\e$ scaling of \eqref{eq:each_worker} is given by
\[
 \r^\infty(y;\ x) = Z^{-1} \exp \Big(-\b\ H(x,y;\ \g) \Big),
\]
and the homogenized vector field is
\[
 \overline{h}(X) = \g^{-1}\ \Big(X- \ag{\bar y} \Big)
\]
where $\ag{\bar y} = \int \bar y\ \r^\infty(d \bar y;\ X)$. By ergodicity, we can replace the average over the invariant measure with a combination of the temporal average and the average over the workers
\aeq{
 \label{eq:esgd_ergodicity}
 \ag{\bar y} &= \lim_{T \to \infty}\ \frac 1 T \int_0^T \bar y(s) \ ds
}
Thus the same argument as in Theorem~\ref{thm:le_homogenization} applies, and we can conclude that the dynamics also converge to gradient descent for local entropy.

\begin{remark}[Variational Interpretation of the distributed algorithm]
An alternate approach to understanding the distributed version of the algorithm, which we hope to study in the future, is modeling it as the gradient flow in the Wasserstein metric of a non-local functional. This functional is obtained by including an additional term in the Fokker-Planck functional,
\[
 J(\r) = \int\ f_\g (x) \r\ dx + \f{\b^{-1}}{2} \int\ \r\ \log \r\ dx + \f{1}{2\g}\ \int\ \int \norm{x-y}^2 \r(x)\ \r(y)\ dx \ dy;
\]
see \cite{santambrogio2015optimal}.
\end{remark}

\subsection{Heat equation versus the viscous Hamilton-Jacobi equation}
\label{ss:heat_vs_hjb}

Lemma~\ref{lem:cole_hopf} showed that local entropy is the solution to the viscous Hamilton-Jacobi equation~\eqref{eq:vhj}. The homogenization dynamics in~\eqref{eq:le_sde_homo} can thus be interpreted as a way of smoothing of the original loss $f(x)$ to aid optimization. Other partial differential equations could also be used to achieve a similar effect.
For instance, the following dynamics that corresponds to gradient descent on the solution of the the heat equation:
\beq{
\label{eq:heat_sde_homo}
\tag{HEAT}
 \aed{
  dx(s) &= -\grad f(x-y)\ ds\\
  dy(s) &= -\f{1}{\e \g}\ y\ ds + \f{1}{\sqrt{\e \b }}\ dW(s).
 }
}
Using a very similar analysis as above, the invariant distribution for the fast variables is $\r^\infty(dy;\ X) = G_{\b^{-1} \g}(y)$. In other words, the fast-dynamics for $y$ is decoupled from $x$ and $y(s)$ converges to a Gaussian distribution with mean zero and variance $\b^{-1} \g\ I$. The homogenized dynamics is given by
\beq{
 \label{eq:heat_ode_homo}
 d X(s) = -\grad f(X) *\ G_{\b^{-1} \g}\ ds = -\grad v(X,\g)
}
where $v(x,\g)$ is the solution of the heat equation $v_t = \frac{\b^{-1}} 2\lap v$ with initial data $v(x,0) = f(x)$. The dynamics corresponds to a Gaussian averaging of the gradients.

\begin{remark}[Comparison between local entropy and heat equation dynamics]
The extra term in the $y(s)$ dynamics for~\eqref{eq:le_sde_homo_2} with respect to~\eqref{eq:heat_sde_homo} is exactly the distinction between the smoothing performed by local entropy and the smoothing performed by heat equation. The latter is common in the deep learning literature under different forms, e.g.,~\citet{gulcehre2016noisy} and~\citet{mobahi2016training}. The former version however has much better empirical performance (cf.\@ experiments in Sec.~\ref{s:expts} and~\citet{chaudhari2016entropy}) as well as improved convergence in theory (cf.\@ Theorem~\ref{thm:improvement}). Moreover, at a critical point, the gradient term vanishes, and the operators~\eqref{eq:heat_sde_homo} and~\eqref{eq:le_sde_homo_2} coincide.
\end{remark}

\section{Stochastic control interpretation}
\label{ss:stochastic_control}
The interpretation of the local entropy function $f_\g(x)$ as the solution of a viscous Hamiltonian-Jacobi equation provided by Lemma~\ref{lem:cole_hopf} allows us to interpret the gradient descent dynamics as an optimal control problem. While the interpretation does not have immediate algorithmic implications, it allows us to prove that the expected value of a minimum is improved by the local entropy dynamics compared to the stochastic gradient descent dynamics. Controlled stochastic differential equations \citep{fleming2006controlled, fleming2012deterministic} are generalizations of SDEs discussed earlier in Sec.~\ref{ss:sgd_continuous}.

\subsection{Stochastic control for a variant of local entropy}
Consider the following controlled SDE
\beq{
 \label{eq:csgd}
 \tag{CSGD}
 \aed{
  dx(s) = -\grad f(x(s))\ ds - \a(s)\ ds + \b^{-1/2}\ dW(s), \quad \trm{for}\ t \leq s \leq T,
 }
}
along with $x(t) = x$. For fixed controls $\a$, the generator for~\eqref{eq:csgd} is given by
\[
 \LL_\a := (-\grad f - \a )\cdot \grad + \f{\b^{-1}}{2}\ \lap
\]
Define the cost functional for a given solution $x(\cdot)$ of \eqref{eq:csgd} corresponding to the control $\a(\cdot)$,
\beq{
 \label{eq:hjb_cost}
 \CC(x(\cdot),\ \a(\cdot)) = \E \sqbrac{ V(x(T)) + \f{1}{2}\ \int_0^T\ \norm{\a(s)}^2\ ds}.
}
Here the terminal cost is a given function $V: \reals^n \to \reals$ and we use the prototypical quadratic running cost. Define the value function
\[
 u(x,t) = \min_{\a(\cdot)}\ \CC(x(\cdot),\ \a(\cdot)).
\]
to be the minimum expected cost, over admissible controls, and over paths which start at $x(t) = x$. From the definition, the value function satisfies
\[
 u(x,T) = V(x).
\]
The dynamic programming principle expresses the value function as the unique viscosity solution of a Hamilton-Jacobi-Bellman (HJB) PDE~\citep{fleming2006controlled, fleming2012deterministic}, $u_t = H(\grad u,\ D^2 u)$ where the operator $H$ is defined by
\[
 H(p,Q) = \min_{\a} \cbrac{\LL_\a(p,Q) + \f{1}{2} |\a|^2} := \min_\alpha \cbrac{(-\grad f - \a )\cdot p + \f{1}{2}\ |\a|^2 + \f{\b^{-1}}{2}\ \trace\ Q}
\]
The minimum is achieved at $\alpha = p$, which results in $H(p,Q) = -\grad f\cdot p - \frac 1 2 |p|^2 + \f{\b^{-1}}{2}\ \trace\ Q$. The resulting PDE for the value function is
\beq{
 \label{eq:hjb}
 \tag{HJB}
 \aed{
 -u_t(x,t) &= -\grad f(x) \cdot \grad u(x,t) - \f{1}{2}\ \abs{\grad u(x,t)}^2 + \f{\b^{-1}}{2}\ \lap u(x,t), \qquad &\trm{for}\ t \leq s \leq T,
 }
}
along with the terminal values $u(x,T) = V(x)$. We make note that in this case, the optimal control is equal to the gradient of the solution
\beq{
 \a(x, t) = \grad u(x,t).
 \label{eq:alpha_grad_u}
}

\begin{remark}[Forward-backward equations]
\label{ss:mean_field_games}
Note that the PDE~\eqref{eq:hjb} is a terminal value problem in backwards time. This equation is well-posed, and by relabelling time, we can obtain an initial value problem in forward time. The corresponding forward equation for the evolution is of a density under the dynamics \eqref{eq:csgd}, involving the gradient of the value function. Together, these PDEs are the forward-backward system
\beq{
 \label{eq:mfg}
 \aed
 {
  -u_t &= -\grad f \cdot \grad u - \f{1}{2}\ \abs{\grad u}^2 + \f{\b^{-1}}{2}\ \lap u\\
  \r_t &= - \grad\ \cdot \Big( \grad u\ \rho \Big)+ \lap \r,
 }
}
for $0\ \leq\ s\ \leq\ T$ along with the terminal and the initial data $u(x,T) = V(x)$, $\r(x,0) = \r_0(x)$.
\end{remark}

\begin{remark}[Stochastic control for \eqref{eq:vhj}]
We can also obtain a stochastic control interpretation for \eqref{eq:vhj} by dropping the $\grad f(x(s))$ term in \eqref{eq:csgd}. Then, by a similar argument,
\eqref{eq:hjb} results in \eqref{eq:vhj}. Thus we obtain the interpretation of solutions of \eqref{eq:vhj} as the value function of an optimal control problem, minimizing the cost function \eqref{eq:hjb_cost} but with the modified dynamics. The reason for including the $\grad f$ term in \eqref{eq:csgd} is that it allows us to prove the comparison principle in the next subsection.
\end{remark}

\begin{remark}[Mean field games]
More generally, we can consider stochastic control problems where the control and the cost depend on the density of other players. In the case of ``small'' players, this can lead to mean field games~\citep{Lasry2007,huang2006large}. In the distributed setting is it natural to consider couplings (control and costs) which depend on the density (or moments of the density). In this context, it may be possible to use mean field game theory to prove an analogue of Theorem~\ref{thm:improvement} and obtain an estimate of the improvement in the terminal reward.
\end{remark}

\subsection{Improvement in the value function}
The optimal control interpretation above allows us to provide the following theorem for the improvement in the value function obtained by the dynamics \eqref{eq:csgd} using the optimal control \eqref{eq:alpha_grad_u}, compared to stochastic gradient descent.

\newcommand{\xcsgd}{x_{\trm{csgd}}}
\newcommand{\xsgd}{x_{\trm{sgd}}}
\begin{theorem}
\label{thm:improvement}
Let $\xcsgd(s)$ and $\xsgd(s)$ be solutions of~\eqref{eq:csgd} and~\eqref{eq:sde}, respectively, with the same initial data $\xcsgd(0) = \xsgd(0) = x_0$.
Fix a time $t \geq 0$ and a terminal function, $V(x)$. Then
\[
 \E \sqbrac{V(\xcsgd(t))} \leq \E \sqbrac{V(\xsgd(t))} - \f{1}{2}\ \E \sqbrac{\int_0^t\ \norm{\a(\xcsgd(s), s)}^2\ ds}.
\]
\end{theorem}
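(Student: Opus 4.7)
The plan is to treat the theorem as a verification-type result in stochastic control. Reinterpret the fixed time $t$ appearing in the statement as the terminal time of the control problem of Sec.~\ref{ss:stochastic_control}: let $u(x,s)$ on $0\le s\le t$ be the value function solving \eqref{eq:hjb} backwards from the terminal condition $u(x,t)=V(x)$. By \eqref{eq:alpha_grad_u} the optimal feedback is $\alpha(x,s)=\grad u(x,s)$, which is precisely the control driving $\xcsgd$ in \eqref{eq:csgd}.

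The key step is to apply It\^o's formula to $\phi(s):=u(\xcsgd(s),s)$ and to $\psi(s):=u(\xsgd(s),s)$. For $\xcsgd$, the drift in $d\phi$ is
\[
 u_s + \grad u \cdot(-\grad f-\grad u)+\tfrac{\b^{-1}}{2}\lap u,
\]
and substituting the HJB identity $u_s=\grad f\cdot\grad u+\tfrac12|\grad u|^2-\tfrac{\b^{-1}}{2}\lap u$ from \eqref{eq:hjb} collapses this to $-\tfrac12|\grad u(\xcsgd(s),s)|^2$. For $\xsgd$, running the same calculation with the zero control (\ie dropping the $-\a(s)\,ds$ term) instead produces drift $+\tfrac12|\grad u(\xsgd(s),s)|^2$. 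Integrating from $0$ to $t$, taking expectations to kill the stochastic integrals, and using $u(\cdot,t)=V$ give
\[
 \E[V(\xcsgd(t))]=u(x_0,0)-\tfrac12\,\E\!\left[\int_0^t|\grad u(\xcsgd(s),s)|^2\,ds\right],
\]
\[
 \E[V(\xsgd(t))]=u(x_0,0)+\tfrac12\,\E\!\left[\int_0^t|\grad u(\xsgd(s),s)|^2\,ds\right]\ \ge\ u(x_0,0).
\]
Subtracting and using $\a(x,s)=\grad u(x,s)$ yields the stated inequality. The positivity of the second expectation (which is discarded) is what quantifies the sub-optimality of pure SGD against the local-entropy feedback.

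The main obstacle is regularity. It\^o's formula as applied above requires $u\in C^{1,2}$ with $\grad u$ of at most linear growth so the It\^o integrals are true martingales. In general \eqref{eq:hjb} is only guaranteed to have a viscosity solution, so one must either (i) impose enough smoothness and growth on $f$ and $V$ (\eg $f$ with Lipschitz gradient and $V$ smooth of polynomial growth) to promote the viscosity solution to a classical one via standard parabolic regularity, or (ii) replace the verification computation by the viscosity-solution comparison principle referenced in Sec.~\ref{ss:overview_results}: sub-optimality of $\a\equiv 0$ yields $u(x_0,0)\le \E[V(\xsgd(t))]$ directly from the dynamic programming inequality, while the optimality of $\a=\grad u$ along $\xcsgd$ gives the matching identity $u(x_0,0)=\E[V(\xcsgd(t))]+\tfrac12\E[\int_0^t|\a|^2ds]$. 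Either route produces the same two-sided comparison and completes the proof.
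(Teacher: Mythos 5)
Your argument is correct and establishes the theorem, but it takes a genuinely different (if closely related) route from the paper's. The paper introduces the auxiliary function $v(x,s)=\E\sqbrac{V(x(t))\given x(s)=x}$ for the \eqref{eq:sde} dynamics as the solution of the linear backward Kolmogorov equation $v_t=\LL v$, observes that the HJB solution $u$ solves the same linear equation up to the extra dissipative term $-\f{1}{2}\abs{\grad u}^2$ with identical terminal data, invokes the parabolic maximum principle to conclude $u\leq v$, and only then uses the stochastic representations of $u$ and $v$ to translate this into the stated inequality. You never introduce $v$ as a separate PDE solution: you apply It\^o's formula to $u$ itself along both trajectories and cancel the HJB identity against the drift. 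Your drift computations are correct, and the payoff is the pair of exact identities
$\E\sqbrac{V(\xcsgd(t))}=u(x_0,0)-\f{1}{2}\E\sqbrac{\int_0^t\abs{\grad u(\xcsgd(s),s)}^2 ds}$ and
$\E\sqbrac{V(\xsgd(t))}=u(x_0,0)+\f{1}{2}\E\sqbrac{\int_0^t\abs{\grad u(\xsgd(s),s)}^2 ds}$,
which is strictly more informative than the theorem: it exhibits the discarded nonnegative gap $\f{1}{2}\E\sqbrac{\int_0^t\abs{\grad u(\xsgd(s),s)}^2 ds}$ explicitly, so the comparison-principle step of the paper becomes a computed identity rather than an abstract inequality. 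Both arguments rest on the same unstated regularity hypothesis --- that the viscosity solution of \eqref{eq:hjb} is in fact $C^{1,2}$ with enough growth control for the It\^o integrals to be martingales (equivalently, for the classical maximum principle to apply) --- and you are right to flag this; your fallback via the dynamic programming inequality, namely that $\a\equiv 0$ is an admissible control so $u(x_0,0)\leq\E\sqbrac{V(\xsgd(t))}$ without ever differentiating $u$ along the SGD path, recovers exactly the paper's comparison step and is the cleaner way to handle low regularity. No gap; if anything your version should record the sharper two-sided identity.
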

The optimal control is given by $\alpha(x,t) = \grad u(x,t)$, where $u(x,t)$ is the solution of~\eqref{eq:hjb} along with terminal data $u(x,T) = V(x)$.

\begin{proof}
Consider \eqref{eq:sde} and let $\LL = -\grad f(x) \cdot \grad + \f{1}{2}\ \lap$ to be the generator, as in \eqref{eq:sde_generator}.
The expected value function $v(x, t) = \E \sqbrac{V(x(t))}$ is the solution of the PDE $v_t = \LL v$.
Let $u(x, t)$ be the solution of~\eqref{eq:hjb}. From the PDE \eqref{eq:hjb}, we have
\[
 u_t = \LL u - \f{1}{2}\ \abs{\grad u}^2 \leq 0.
\]
We also have $u(x,T) = v(x,T) = V(x)$.
Use the maximum principle~\citep{Evansbook} to conclude
\beq{
 u(x,s) \leq v(x,s), \quad \trm{for\ all}\ t \leq s \leq T;
 \label{eq:sgd_le_comparison}
}
Use the definition to get
\[
 v(x,t) = \E \sqbrac{V(x(t))\ \given x(t) = x};
\]
the expectation is taken over the paths of~\eqref{eq:sde}. Similarly,
\[
 u(x,t) = \E \sqbrac{V(x(t)) + \f{1}{2}\ \int_0^t\ \norm{\a}^2\ ds}
\]
over paths of~\eqref{eq:csgd} with $x(t) = x$. Here $\a(\cdot)$ is the optimal control. The interpretation along with \eqref{eq:sgd_le_comparison} gives the desired result. Note that the second term on the right hand side in Theorem~\ref{thm:improvement} above can also be written as
\[
 \f{1}{2}\ \E \sqbrac{\int_0^t\ \norm{\grad u(\xcsgd(s), s)}^2\ ds}.
 \qedhere
\]
\end{proof}

\section{Regularization, widening and semi-concavity}
\label{s:semi_concavity}

The PDE approaches discussed in Sec.~\ref{s:le_and_hjb} result in a smoother loss function. We exploit the fact that our regularized loss function is the solution of the viscous HJ equation. This PDE allows us to apply standard semi-concavity estimates from PDE theory~\citep{Evansbook,cannarsa2004semiconcave} and quantify the amount of smoothing. Note that these estimates do not depend on the coefficient of viscosity so they apply to the HJ equation as well. Indeed, semi-concavity estimates apply to solutions of PDEs in a more general setting which includes~\eqref{eq:vhj} and~\eqref{eq:hj}.

We begin with special examples which illustrate the widening of local minima. In particular, we study the limiting case of the HJ equation, which has the property that local minima are preserved for short times. We then prove semiconcavity and establish a more sensitive form of semiconcavity using the harmonic mean. Finally, we explore the relationship between wider robust minima and the local entropy.

\subsection{Widening for solutions of the Hamilton-Jacobi PDE}

A function $f(x)$ is semi-concave with a constant $C$ if $f(x) - \f{C}{2}\ \norm{x}^2$ is concave, refer to~\cite{cannarsa2004semiconcave}.
Semiconcavity is a way of measuring the width of a local minimum: when a function is semiconcave with constant $C$, at a local minimum, no eigenvalue of the Hessian can be larger than $C$. The semiconcavity estimates below establish that near high curvature local minima widen faster than flat ones, when we evolve $f$ by \eqref{eq:vhj}.

It is illustrating to consider the following example.
\begin{example}\label{eq:parab}
Let $u(x,t)$ be the solution of ~\eqref{eq:vhj} with $u(x,0) = c\norm{x}^2/2$. Then
\[
 u(x,t) = \f{\norm{x}^2}{2(t+c^{-1})}\ + \frac{\b^{-1} n}{2} \log (t+c^{-1}).
\]
In particular, in the non-viscous case, $u(x,t) = \f{\norm{x}^2}{2(t+c^{-1})}$.
\end{example}
\begin{proof}
This follows from taking derivatives:
\[
 \grad u(x,t) = \frac{x}{t+c^{-1}},
 \qquad
 \frac{\abs{\grad u}^2}2 = \frac{\norm{x}^2} {2(t+c^{-1})^2},
 \qquad
 \lap u(x,t) = \frac{n}{t+1},
 \qquad
 u_t = -\frac{\norm{x}^2} {2(t+c^{-1})^2} + \frac{\b^{-1} n}{2} \frac{1} {t+c^{-1}}.
 \qedhere
\]
\end{proof}

In the previous example, the semiconcavity constant of $u(x,t)$ is $C(t) = 1/(c^{-1} + t)$ where $c$ measures the curvature of the initial data. This shows that for very large $c$, the improvement is very fast, for example $c = 10^8$ leads to $C(t = .01) \approx 10$, etc. In the sequel, we show how this result applies in general for both the viscous and the non-viscous cases. In this respect, for short times, solutions of~\eqref{eq:vhj} and~\eqref{eq:hj} widen faster than solutions of the heat equation.

\begin{example}[Slower rate for the heat equation]
In this example we show that the heat equation can result in a very slow improvement of the semiconcavity. Let $v_1(x)$ be the first eigenfunction of the heat equation, with corresponding eigenvalue $\l_1$, (for example $v_1(x) = \sin(x)$). In many cases $\l_1$ is of order 1. Then, since $v(x,t) = \exp(-\l_1 t)\ v_1(x)$ is a solution of the heat equation, the seminconcavity constant for $v(x,t)$ is $C(t) = c_0 \exp(-\l_1 t)$, where $c_0$ is the constant for $v_1(x)$. For short time, $C(t) \approx c_0 (1-\l_1 t)$ which corresponds to a much slower decrease in $C(t)$.
\end{example}

Another illustrative example is to study the widening of convex regions for the non-viscous Hamilton-Jacobi equation \eqref{eq:hj}. As can be seen in Figure~\ref{fig:smoothing}, solutions of the Hamilton-Jacobi have additional remarkable properties with respect to local minima. In particular, for small values of $t$ (which depend on the derivatives of the initial function) local minima are preserved. To motivate these properties, consider the following examples. We begin with the Burgers equation.

\begin{example}[Burgers equation in one dimension]
There is a well-known connection between Hamilton-Jacobi equations in one dimension and the Burgers equation~\citep{Evansbook} which is the prototypical one-dimensional conservation law. Consider the solution $u(x,t)$ of the non-viscous HJ equation~\eqref{eq:hj} with initial value $f(x)$ where $x \in \reals$. Let $p(x,t) = u_x(x,t)$. Then $p$ satisfies the Burgers equation
\[
 p_t = -p\ p_x
\]
We can solve Burgers equation using the method of characteristics, for a short time, depending on the regularity of $f(x)$. Set $p(x,t) = u_x(x,t)$. For short time, we can also use the fact that $p(x,t)$ is the fixed point of~\eqref{eq:p_hop_lax_min}, which leads to
\begin{equation}
	\label{peq}
	 p(x,t) = f'(x - tp(x,t))
\end{equation}
In particular, the solutions of this equation remain classical until the characteristics intersect, and this time, $t^*$, is determined by
\begin{equation}
	\label{teq}
	t^* f''(x - p t^*) +1 = 0
\end{equation}
which recovers the convexity condition of Lemma~\ref{lem:HJ_dynamics}.
\end{example}

\begin{example}[Widening of convex regions in one dimension]
In this example, we show that the widening of convex regions for one dimensional displayed in Figure~\ref{fig:smoothing} occurs in general. Let $x$ be a local minimum of the smooth function $f(x)$, and let $0 \leq t < t^*$, where the critical $t^*$ is defined by \eqref{teq}. Define the convexity interval as:
\[
 I(x,t) = \text{ the widest interval containing $x$ where $u(x,t)$ is convex }
\]
Let $I(x,0) = [x_0,\ x_1]$, so that $f''(x) \geq 0$ on $[x_0,\ x_1]$ and $f''(x_0) = f''(x_1) = 0$. Then $I(x,t)$ contains $I(x,0)$ for $0 \leq t \leq t^*$.
Differentiating \eqref{peq}, and the solving for $p_x(x,t)$ leads to
\[
 \aed{
 p_x\ &= f''(x - t p)\ (1 - t p_x)\\
 \implies p_x &= \f{f''(x - t p)}{1+t\ f''(x- t p)}.
 }
\]
Since $t \leq t^*$, the last equation implies that $p_x$ has the same sign as $f''(x - t p)$.
Recall now that $x_1$ is an inflection point of $f(x)$. Choose $x$ such that $x_1 = x - tp(x,t)$, in other words, $u_{xx}(x,t) = 0$. Note that $x > x_1$, since $p(x,t) > 0$. Thus the inflection point moves to the right of $x_1$. Similarly the inflection point $x_0$ moves to the left. So we have established that the interval is larger.
\end{example}

\begin{remark}
In the higher dimensional case, well-known properties of the inf-convolution can be used to establish the following facts. If $f(x)$ is convex, $u(x,t)$ given by~\eqref{eq:hopf_lax} is also convex; thus, any minimizer of $f$ also minimizes $u(x,t)$. Moreover, for bounded $f(x)$, using the fact that $\norm{y^*(x) - x} = \OO(\sqrt{t})$, one can obtain a local version of the same result, viz., for short times $t$, a local minimum persists. Similar to the example in Figure~\ref{fig:smoothing}, local minima in the solution vanish for longer times.
\end{remark}

\subsection{Semiconcavity estimates}
\label{ss:bounds_hessian}
In this section we establish semiconcavity estimates. These are well known, and are included for the convenience of the reader.
\begin{lemma}
\label{lem:semi_concave}
Suppose $u(x,t)$ is the solution of~\eqref{eq:vhj}, and let $\b^{-1} \geq 0$. If
\[
 C_k = \sup_x\ u_{x_k x_k}(x,0) \quad \trm{and} \quad
 C_{\trm{Lap}}\ = \sup_x\ \lap u(x,0),
\]
then
\[
 \sup_x\ u_{x_k x_k}(x,t) \leq \f{1}{C_k^{-1} + t},
 \qquad \text{ and } \qquad
 \sup_x\ \lap u(x,t) \leq \f{1}{C_\trm{Lap}^{-1} + t/n}.
\]
\end{lemma}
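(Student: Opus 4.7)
The plan is to run the standard Bernstein / maximum-principle argument on the second derivative of $u$, treating the two bounds in parallel since the $\Delta u$ bound is just a symmetrized version of the $u_{x_kx_k}$ bound with an extra factor of $n$ from Cauchy--Schwarz. First I would differentiate \eqref{eq:vhj} twice in $x_k$. Writing $w = u_{x_k x_k}$, this gives
\[
 w_t = -\,\nabla u\cdot \nabla w \;-\; \bigl|\nabla u_{x_k}\bigr|^2 \;+\; \tfrac{\b^{-1}}{2}\lap w.
\]
The crucial pointwise inequality is $|\nabla u_{x_k}|^2 \ge u_{x_k x_k}^2 = w^2$, which turns the evolution into the differential inequality
\[
 w_t + \nabla u\cdot \nabla w - \tfrac{\b^{-1}}{2}\lap w \;\le\; -\,w^2.
\]

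For the Laplacian bound, I would set $W = \lap u$ and apply $\lap$ to \eqref{eq:vhj}, using $\lap(|\nabla u|^2) = 2|D^2 u|^2 + 2\nabla u\cdot \nabla \lap u$. The Cauchy--Schwarz inequality $|D^2 u|^2 \ge (\lap u)^2/n$ then yields
\[
 W_t + \nabla u \cdot \nabla W - \tfrac{\b^{-1}}{2}\lap W \;\le\; -\,\tfrac{W^2}{n}.
\]

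Next I would invoke the parabolic maximum principle on the linear operator $\partial_t + \nabla u\cdot \nabla - \tfrac{\b^{-1}}{2}\lap$: the spatial suprema $M(t) := \sup_x w(x,t)$ and $N(t) := \sup_x W(x,t)$ satisfy the ODE differential inequalities $M'(t) \le -M(t)^2$ and $N'(t) \le -N(t)^2/n$ in the viscosity sense. Both are Bernoulli ODEs: setting $\varphi(t) = 1/M(t)$ gives $\varphi' \ge 1$, so $M(t) \le 1/(M(0)^{-1} + t) \le 1/(C_k^{-1} + t)$, and the same substitution applied to $N$ gives $N(t) \le 1/(C_{\trm{Lap}}^{-1} + t/n)$. (When $C_k$ or $C_{\trm{Lap}}$ is non-positive, the bound is trivial and the argument still goes through via the comparison with the solution of the ODE.)

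The main obstacle is justifying the passage to the sup-in-$x$ rigorously, because $u$ is only guaranteed to be a viscosity solution and the supremum may not be attained. For $\b^{-1}>0$, standard parabolic regularity makes $u$ smooth for $t>0$ so the classical maximum principle applies after a small-time approximation. For $\b^{-1}=0$, I would either appeal directly to the well-known Hopf--Lax representation in Lemma~\ref{lem:grad_fg_hopf_lax} (inf-convolution with $\frac{1}{2t}\|\cdot\|^2$ automatically produces a function semiconcave with constant $1/t$, refining to $1/(C^{-1}+t)$ when the initial data already has semiconcavity constant $C$), or pass to the vanishing-viscosity limit $\b^{-1}\to 0^+$ of the smooth case, using the fact that the ODE bounds above are uniform in $\b^{-1}$.
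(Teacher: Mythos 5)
Your proposal is correct and follows essentially the same route as the paper: differentiate \eqref{eq:vhj} twice, use $\sum_i u_{x_i x_k}^2 \ge w^2$ (resp.\ Cauchy--Schwarz $|D^2u|^2 \ge (\lap u)^2/n$) to obtain the Riccati-type differential inequality, and close with the comparison principle against $1/(C^{-1}+t)$ --- your ODE-on-the-supremum formulation is just the integrated form of the paper's explicit spatially independent supersolution. Your added care about regularity and the $\b^{-1}=0$ case via the Hopf--Lax/inf-convolution representation matches (and slightly sharpens) the paper's first step, which only records the cruder $1/t$ constant there.
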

\begin{proof}
1. First consider the nonviscous case $\b^{-1}= 0$. Define $g(x, y) = f(y) + \f{1}{2 t}\ \norm{x-y}^2$ to be a function of $x$ parametrized by $y$. Each $g(x, y)$, considered as a function of $x$ is semi-concave with constant $1/t$. Then $u(x,t)$, the solution of \eqref{eq:hj}, which is given by the Hopf-Lax formula \eqref{eq:hopf_lax} is expressed as the infimum of such functions. Thus $u(x,t)$ is also semi-concave with the same constant, $1/t$.

2. Next, consider the viscous case $\b^{-1} > 0$.
Let $w = u_{x_k x_k}$, differentiating~\eqref{eq:vhj} twice gives
\beq{
 w_t + \grad u\ \cdot\ \grad w + \sum_{i=1}^n\ u_{x_i x_k}^2 = \f{\b^{-1}}{2}\ \lap w.
 \label{eq:vhj_diff}
}
Using $u_{x_k x_k}^2 = w^2$, we have
\[
 w_t + \grad u \cdot \grad w - \f{\b^{-1}}{2}\ \lap w \leq -w^2.
\]
Note that $w(x,0) \leq C_k$ and the function
\[
 \phi(x,t) = \f{1}{C_k^{-1} + t}
\]
is a spatially independent super-solution of the preceding equation with $w(x,0) \leq \phi(x,0)$. By the comparison principle applied to $w$ and $\phi$, we have
\[
 w(x,t) = u_{x_k x_k}(x,t) \leq \f{1}{C_k^{-1} + t}
\]
for all $x$ and for all $t \geq 0$, which gives the first result.
Now set $v = \lap u$ and sum~\eqref{eq:vhj_diff} over all $k$ to obtain
\aeqs{
 v_t + \grad u \cdot \grad v + \sum_{i,j=1}^n\ u_{x_i x_j}^2 &= \f{\b^{-1}}{2}\ \lap v}
Thus
\[
 v_t + \grad u \cdot \grad v - \f{\b^{-1}}{2} \lap v \leq -\f{1}{n}\ v^2
\]
by the Cauchy-Schwartz inequality. Now apply the comparison principle to $v'(x,t) = \rbrac{C_{\trm{Lap}}^{-1} + t/n}^{-1}$ to obtain the second result.
\end{proof}

\subsection{Estimates on the Harmonic mean of the spectrum}
\label{sss:harmonic_mean}

Our semi-concavity estimates in Sec.~\ref{ss:bounds_hessian} gave bounds on $\sup_x\ u_{x_k x_k}(x,t)$ and the Laplacian $\sup_x\ \lap u(x,t)$. In this section, we extend the above estimates to characterize the eigenvalue spectrum more precisely. Our approach is motivated by experimental evidence in~\citet[Figure\@ 1]{chaudhari2016entropy} and~\citet{sagun2016singularity}. The authors found that the Hessian of a typical deep network at a local minimum discovered by SGD has a very large proportion of its eigenvalues that are close to zero. This trend was observed for a variety of neural network architectures, datasets and dimensionality.
For such a Hessian, instead of bounding the largest eigenvalue or the trace (which is also the Laplacian), we obtain estimates of the harmonic mean of the eigenvalues. This effectively discounts large eigenvalues and we get an improved estimate of the ones close to zero, that dictate the width of a minimum. The harmonic mean of a vector $x \in \reals^n$ is
\[
 \hmean(x) = \rbrac{\f{1}{n}\ \sum_{i=1}^n\ \f{1}{x_i} }^{-1}.
\]
The harmonic mean is more sensitive to small values as seen by
\[
 \min_i\ x_i \leq \hmean(x)\ \leq\ n \min_i\ x_i.
\]
which does not hold for the arithmetic mean. To give an example, the eigenspectrum of a typical network~\citep[Figure\@ 1]{chaudhari2016entropy} has an arithmetic mean of $0.0029$ while its harmonic mean is $\approx 10^{-10}$ which better reflects the large proportion of almost-zero eigenvalues.

\begin{lemma}
\label{lem:eig_diag}
If $\L$ is the vector of eigenvalues of the symmetric positive definite matrix $A \in \reals^{n\times n}$ and $D = (a_{11}, \ldots, a_{nn})$ is its diagonal vector,
\[
 \hmean(\L) \leq \hmean(D).
\]
\end{lemma}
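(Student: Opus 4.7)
The plan is to translate the harmonic-mean inequality into a statement about traces and then exploit positive definiteness. Taking reciprocals of both sides (which reverses the inequality), the claim $\hmean(\L)\leq \hmean(D)$ is equivalent to
\[
\sum_{i=1}^n \frac{1}{\l_i} \;\geq\; \sum_{i=1}^n \frac{1}{a_{ii}},
\]
and since $\sum_i 1/\l_i = \tr(A^{-1}) = \sum_i (A^{-1})_{ii}$, it suffices to prove the pointwise (diagonal) inequality
\[
(A^{-1})_{ii} \;\geq\; \frac{1}{a_{ii}}, \qquad i = 1,\dots,n.
\]

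To establish this, I would diagonalize $A = U^\top \L U$ with $U$ orthogonal and $\L = \mathrm{diag}(\l_1,\dots,\l_n)$. Then
\[
a_{ii} \;=\; \sum_{k=1}^n u_{ki}^2 \, \l_k, \qquad
(A^{-1})_{ii} \;=\; \sum_{k=1}^n \frac{u_{ki}^2}{\l_k},
\]
and the weights $w_k := u_{ki}^2$ satisfy $w_k \geq 0$ and $\sum_k w_k = 1$ because $U$ is orthogonal. Since $A$ is positive definite we have $\l_k > 0$, so we may apply Jensen's inequality to the convex function $\phi(t)=1/t$ on $(0,\infty)$:
\[
\frac{1}{a_{ii}} \;=\; \frac{1}{\sum_k w_k \l_k} \;=\; \phi\!\Bigl(\sum_k w_k \l_k\Bigr) \;\leq\; \sum_k w_k\, \phi(\l_k) \;=\; \sum_k \frac{u_{ki}^2}{\l_k} \;=\; (A^{-1})_{ii}.
\]
(Equivalently, one could apply Cauchy--Schwarz as $1 = \sum_k (u_{ki}\sqrt{\l_k})(u_{ki}/\sqrt{\l_k}) \leq \sqrt{a_{ii}\,(A^{-1})_{ii}}$.)

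Summing the diagonal inequality over $i$ gives $\sum_i 1/a_{ii} \leq \tr(A^{-1}) = \sum_i 1/\l_i$, dividing by $n$ and taking reciprocals reverses the inequality to yield $\hmean(\L)\leq \hmean(D)$, which is the desired result. There is no real obstacle here: the only thing one needs to be careful about is positivity of all $\l_i$ and $a_{ii}$ (guaranteed by positive definiteness) so that reciprocals and Jensen's inequality on $1/t$ are legitimate, and the fact that the rows of an orthogonal matrix have unit $\ell_2$ norm so the weights $u_{ki}^2$ form a probability distribution.
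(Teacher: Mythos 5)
Your proof is correct, but it takes a genuinely different route from the paper's. The paper argues via majorization: Schur's theorem says the diagonal of a symmetric positive semi-definite matrix is majorized by its vector of eigenvalues, and the harmonic mean is Schur-concave (checked with the Schur--Ostrowski criterion), which gives $\hmean(\L)\leq\hmean(D)$ in one step. You instead reduce the claim to the pointwise diagonal inequality $(A^{-1})_{ii}\geq 1/a_{ii}$, prove it by Jensen (or Cauchy--Schwarz) applied to the spectral weights $u_{ki}^2$, and then sum using $\tr(A^{-1})=\sum_i 1/\l_i$. Every step checks out: positive definiteness guarantees $\l_k>0$ and $a_{ii}>0$ so the reciprocals and the convexity of $t\mapsto 1/t$ are legitimate, and orthogonality of $U$ makes the weights $u_{ki}^2$ a probability distribution. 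Your argument is more elementary and self-contained, requiring nothing beyond the spectral theorem and convexity; the paper's argument, at the price of citing Schur's majorization theorem, delivers the comparison $f(\L)\leq f(D)$ simultaneously for every Schur-concave function $f$ (geometric mean, elementary symmetric means, and so on), of which the harmonic mean is just one instance. For the specific inequality stated in the lemma the two proofs are equally valid.
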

\begin{proof}
Majorization is a partial order on vectors with non-negative components~\citep[Chap.\@ 3]{marshall1979inequalities} and for two vectors $x,y \in \reals^n$, we say that $x$ is majorized by $y$ and write $x \preceq y$ if $x = S y$ for some doubly stochastic matrix $S$. The diagonal of a symmetric, positive semi-definite matrix is majorized by the vector of its eigenvalues~\citep{schur1923uber}. For any Schur-concave function $f(x)$, if $x \preceq y$ we have
\[
 f(y) \leq f(x)
\]
from~\citet[Chap.\@ 9]{marshall1979inequalities}. The harmonic mean is also a Schur-concave function which can be checked using the Schur-Ostrowski criterion
\[
 (x_i - x_j) \rbrac{\f{\partial\ \hmean}{\partial x_i} -\f{\partial\ \hmean}{\partial x_j}} \leq 0 \quad \trm{for\ all}\ x \in \reals^n_+
\]
and we therefore have
\[
 \hmean(\L) \leq \hmean(D).
 \qedhere
\]
\end{proof}

\begin{lemma}
\label{lem:hmean_eig}
If $u(x,t)$ is a solution of~\eqref{eq:vhj} and $C \in \reals^n$ is such that $\sup_x\ u_{x_k x_k}(x,t) \leq C_k$ for all $k \leq n$, then at a local minimum $x^*$ of $u(x,t)$,
\[
 \hmean(\L) \leq \f{1}{t + \hmean(C)^{-1}}
\]
where $\L$ is the vector of eigenvalues of $\grad^2 u(x^*, t)$.
\end{lemma}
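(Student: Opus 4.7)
The plan is to chain the two preceding lemmas together with a one-line algebraic computation, together with a routine monotonicity fact about the harmonic mean.

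First, at the local minimum $x^*$, the Hessian $H := \grad^2 u(x^*,t)$ is symmetric and positive semidefinite, so both its diagonal $D = \bigl(u_{x_1 x_1}(x^*,t), \dots, u_{x_n x_n}(x^*,t)\bigr)$ and its eigenvalue vector $\L$ have non-negative entries. Assume for now that $H$ is strictly positive definite so that the harmonic means are well defined (the degenerate case is either vacuous, since $\hmean(\L)=0$ trivially satisfies the bound, or can be recovered by a limiting argument using $H + \e I$). By Lemma~\ref{lem:eig_diag} applied to $H$, we have $\hmean(\L) \leq \hmean(D)$, which handles the passage from eigenvalues to the diagonal.

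Next, I interpret $C_k$ as the bound on the initial Hessian entry, i.e. $C_k = \sup_x u_{x_k x_k}(x,0)$, so that Lemma~\ref{lem:semi_concave} applies and gives, for each $k$,
\[
 D_k = u_{x_k x_k}(x^*,t) \leq \f{1}{C_k^{-1}+t}.
\]
Since the harmonic mean is coordinate-wise non-decreasing on $\R_+^n$ (as $\partial \hmean / \partial x_k = \hmean(x)^2 /(n\, x_k^2) \geq 0$), we obtain
\[
 \hmean(D) \leq \hmean\!\left(\left(\f{1}{C_k^{-1}+t}\right)_{k=1}^n\right).
\]

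Finally, a direct calculation collapses the right-hand side:
\[
 \hmean\!\left(\left(\f{1}{C_k^{-1}+t}\right)_{k=1}^n\right)
 = \f{n}{\sum_{k=1}^n (C_k^{-1}+t)}
 = \f{1}{t + \tfrac{1}{n}\sum_{k=1}^n C_k^{-1}}
 = \f{1}{t + \hmean(C)^{-1}},
\]
which chains with the previous two inequalities to yield the claimed bound.

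The main obstacle is essentially bookkeeping rather than mathematical: confirming the monotonicity property of the harmonic mean in each positive coordinate, and being careful about the degenerate case where $H$ has a zero eigenvalue so that $\hmean(\L)$ is not strictly defined — this is harmless here because the desired conclusion is an upper bound and the degenerate case only strengthens it. Beyond these technical points, the argument is a clean composition of Lemma~\ref{lem:eig_diag} and Lemma~\ref{lem:semi_concave} with a one-line identity for the harmonic mean of reciprocals of shifted reciprocals.
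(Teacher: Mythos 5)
Your argument is correct and follows the paper's own proof essentially verbatim: Lemma~\ref{lem:eig_diag} gives $\hmean(\L)\leq\hmean(D)$, Lemma~\ref{lem:semi_concave} bounds each diagonal entry by $1/(C_k^{-1}+t)$, and the harmonic mean of these bounds collapses to $1/(t+\hmean(C)^{-1})$. Your added remarks --- making the coordinate-wise monotonicity of $\hmean$ explicit, reading $C_k$ as the bound on the \emph{initial} second derivatives (which the lemma statement's ``$u_{x_kx_k}(x,t)$'' appears to misprint), and handling the degenerate positive-semidefinite case --- are sensible clarifications of steps the paper leaves implicit.
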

\begin{proof}
For a local minimum $x^*$, we have $\hmean(L) \leq \hmean(D)$ from the previous lemma; here $D$ is the diagonal of the Hessian $\grad^2 u(x_{\trm{min}},t)$. From Lemma~\ref{lem:semi_concave} we have
\[
 u_{x_k x_k}(x^*, t) \leq C_k(t) = \f{1}{t + C_k^{-1}}.
\]
The result follows by observing that
\[
\hmean(C(t)) = \f{n}{\sum_{i=1}^n\ C_k(t)^{-1}} = \f{1}{t + \hmean(C)^{-1}}.
\qedhere
\]
\end{proof}

\ignore{
\subsection{Wider minima for controlled SGD}
The width of minima that SGD converges to \ab{has been}{cite your paper?} connected to generalization error of a deep network. The larger the width, the smaller is the difference between the performance of a classifier on the training set and an unknown validation set, which is desirable in machine learning.

{In this section, we explain how we can use the regularized loss function $f_\g(x)$ as a proxy for a robust definition of the width of a local minimum. This is achieved by expanding $f(y)$ around the minimum $x^*$. }

\begin{definition}[Width of a local minimum]
The width of a local minimum $x$ of $f(x)$ is defined to be the Laplacian of $f_\g(x)$ at $x$.
average curvature of a smoothened version of $f(x)$
\end{definition}
Note the width as defined above is a function of the parameter $\g$. The motivation to define it so comes from the fact that, at a local minimum, we can perform a Taylor expansion,

Suppose $x$ is a local minimum of the function $f(x)$, which need not be smooth.
Given the tolerance $\e > 0$, define the robust width, $\g^*$ of the minimum at $x$ to be the largest value $\g^*$ so that
\begin{equation}
\label{eq:rlm}
 \abs{ f_\gamma(x) - f(x) } \leq \e,
 \quad \text{ for all } 0 \leq \g \leq \g^*
\end{equation}

This definition is motivated by the fact (refer to~\eqref{sec:quadratic_expansion}) that near a local minimum, $f_\gamma(x)$ is approximated by the convolution of $f$ with a Gaussian of width $\g$. So if \eqref{eq:rlm} holds, the values of $f(x)$ are nearly constant in the neighborhood $\abs{y-x} \leq \g^*$. The definition above was used in \cite{chaudhari2016entropy}.

For small enough $\g$, \aeqs{
 f_\g(x^*) &= -\log\ \int_y\ \exp \rbrac{-\f{\norm{x^*-y}^2}{2 \g} - f(y)}\ dy\\
 &\approx f(x^*) -\log\ \int_y\ \exp \sqbrac{-\f{1}{2}\ (x^*-y)^\top\ \rbrac{\grad^2 f(x^*) + \g^{-1}\ I} (x^*-y)}\ dy\\
 &= f(x^*) + \log\ \trm{det} \rbrac{\grad^2 f(x^*) + \f{I}{\g}} + \trm{constant}\\
 &= \sum_{i=1}^n\ \l_i\rbrac{\grad^2 f(x^*) + \f{1}{\g}},
}
up to constants; here $\l_i(A)$ is the $i^{\trm{th}}$ eigenvalue of the matrix $A$. Note that since $x^*$ is a minimum $\grad f(x^*) = 0$ and $\grad^2 f(x^*)$ is a positive semi-definite matrix. The smaller the eigenvalues $\l_i$, the smaller local entropy $f_\g(x^*)$ and wider the minimum.

now gives:
\[
 \E \sqbrac{f_\g(\xcsgd(t))} \leq \E \sqbrac{f_\g(\xsgd(t))} - \f{1}{2}\ \E \sqbrac{\int_0^t\ \norm{\a(\xcsgd(s), s)}^2\ ds};
\]
where $t > 0$ is deterministic. For large enough $t$ such that $\norm{\xcsgd(t) - x^*_{f_\g}} \leq \e$ and $\norm{\xsgd(t) - x^*_f} \leq \e$ where $x^*_{f_\g}$ and $x^*_f$ are local minima of $f_\g(x)$ and $f(x)$ respectively and $\e = \OO(\g^{-1/2})$. We then have that $f_\g(x^*_{f_\g}) \leq \E \sqbrac{f_\g(\xcsgd(t))}$ and $\E \sqbrac{f_\g(\xsgd(t))} \leq f_\g(x^*_f) + L_{f_\g} \e$
where $L_{f_\g}$ is the Lipschitz constant of $f_\g(x)$. Combining these gives
\[
 f_\g(x^*_{f_\g}) \leq \E \sqbrac{f_\g(x^*_f)} - \f{1}{2}\ \E \sqbrac{\int_0^\g\ \norm{\a(\xcsgd(s), s)}^2\ ds} + \OO(\g^{-1/2});
\]
which bounds the improvement obtained by using $f_\g(x)$ instead of $f(x)$ as the loss function.
}

\section{Algorithmic details}
\label{s:algorithmic_details}
In this section, we compare and contrast the various algorithms in this paper and provide implementation details that are used for the empirical validation are provided in Sec.~\ref{s:expts}.
\subsection{Discretization of \texorpdfstring{\eqref{eq:le_sde_homo}}{Entropy-SGD}}
\label{ss:esgd_algorithmic}
The time derivative in the equation for $y(s)$ in \eqref{eq:le_sde_homo} is discretized using the Euler-Maruyama method with time step (learning rate) $\eta_y$ as
\begin{equation}
	\label{eq:first_discrete_y}
	 y^{k+1} = y^k - \eta_y\ \sqbrac{\grad f(y^k) + \f{y^k-x^k}{\g}} + \sqrt{\eta_y\ \b^{-1}}\ \e^k
\end{equation}
where $\e_k$ are zero-mean, unit variance Gaussian random variables.

In practice, we do not have access to the full gradient $\grad f(x)$ for a deep network and instead, only have the gradient over a mini-batch $\grad f_{\trm{mb}}(x)$ (cf.\@ Sec.~\ref{ss:sgd}). In this case, there are two potential sources of noise: the coefficient $\b^{-1}_{\trm{mb}}$ arising from the mini-batch gradient, and any amount of extrinsic noise, with coefficient $\b^{-1}_{\trm{ex}}$. Combining these two sources leads to the equation
\beq{
 \label{eq:esgd_discrete_y}
 y^{k+1} = y^k - \eta_y\ \sqbrac{\grad f_{\trm{mb}}(y^k) + \f{y^k-x^k}{\g}} + \sqrt{\eta_y\ \b^{-1}_{\trm{ex}}}\ \e^k.
}
which corresponds to~\eqref{eq:first_discrete_y} with
\[
 \b^{-1/2} = \b^{-1/2}_{\trm{mb}} + \b^{-1/2}_{\trm{ex}}
\]
We initialize $y^k = x^k$ if $k/L$ is an integer. The number of time steps taken for the $y$ variable is set to be $L$. This corresponds to $\e = 1/L$ in~\eqref{eq:homo_sde}
and to $T=1$ in~\eqref{eq:lesgd_ergodicity}.
This results in a discretization of the equation for $x$ in \eqref{eq:le_sde_homo}
given by
\beq{
 \label{eq:esgd_discrete_x}
 \aed{
 x^{k+1} &= \begin{cases}
 x^k -\eta\ \g^{-1}\ \rbrac{x^k - \ag{y}^k}& \trm{if}\ k/L\ \trm{is\ an\ integer},\\
 x^k& \trm{otherwise};\\
 \end{cases}
 }
}
Since we may be far from the limit $\e \to 0$, $T \to \infty$, instead of using the linear averaging for $\ag{y}$ in \eqref{eq:lesgd_ergodicity}, we will use a forward looking average. Such an averaging gives more weight to the later steps which is beneficial for large values of $\g$ when the invariant measure $\r^\infty(y;\ x)$ in~\eqref{eq:lesgd_ergodicity} does not converge quickly.
The two algorithms we describe will differ in the choices of $L$ and $\b^{-1}_{\trm{ex}}$ and the definition of $\ag{y}$. For \textbf{Entropy-SGD}, we set $L = 20$, $\b^{-1}_{\trm{ex}} = 10^{8}$, and set $\ag{y}$ to be
\[
 \ag{y}^{k+1} = \a \ag{y}^k + (1-\a)\ y^{k+1};
\]
$\ag{y}^k$ is re-initialized to $x^k$ if $k/L$ is an integer. The parameter $\a$ is used to perform an exponential averaging of the iterates $y^k$. We set $\a=0.75$ for experiments in Sec.~\ref{s:expts} using Entropy-SGD. This is equivalent to the original implementation of~\citet{chaudhari2016entropy}.

The step-size for the $y^k$ dynamics is fixed to $\eta_y = 0.1$. It is a common practice to perform step-size annealing while optimizing deep networks (cf.\@ Sec.~\ref{ss:sgd_practical}), i.e., we set $\eta=0.1$ and reduce it by a factor of $5$ after every $3$ epochs (iterations on the entire dataset).

\subsection{Non-viscous Hamilton-Jacobi (HJ)}
\label{ss:hj_algorithmic}

For the other algorithm which we denote as \textbf{HJ}, we set $L = 5$ and $\b^{-1}_{\trm{ex}} = 0$, and set $\ag{y}^k = y^k$ in~\eqref{eq:esgd_discrete_y} and~\eqref{eq:esgd_discrete_x}, i.e., no averaging is performed and we simply take the last iterate of the $y^k$ updates.

\begin{remark}
We can also construct an equivalent system of updates for \textbf{HJ} by discretizing~\eqref{eq:le_sde_homo_2} and again setting $\b^{-1}_{\trm{ex}} = 0$; this exploits the two different ways of computing the gradient in Lemma~\ref{lem:cole_hopf} and gives
\beq{
 \label{eq:hj_discrete}
 \aed{
  y^{k+1} &= \rbrac{1 - \g^{-1}\ \eta_y}\ y^k + \eta_y\ \grad f_{\trm{mb}}(x^k - y^k)\\
  x^{k+1} &=
  \begin{cases}
   x^k -\eta\ \grad f_{\trm{mb}}(x^k - y^{k})& \trm{if}\ k/L\ \trm{is\ an\ integer},\\
   x^k & \trm{else};\\
  \end{cases}
 }
}
and we initialize $y^k = 0$ if $k/L$ is an integer.
\end{remark}

\subsection{Heat equation: }
\label{ss:heat_algorithmic}
We perform the update corresponding to~\eqref{eq:heat_ode_homo}
\beq{
 \label{eq:heat_avg}
 x^{k+1} = x^k - \f{\eta}{L}\ \sqbrac{\sum_{i=1}^L \grad f_{\trm{mb}}(x^k+\e^i)}
}
where $\e^i$ are Gaussian random variables with zero mean and variance $\g\ I$. Note that we have implemented the convolution in~\eqref{eq:heat_ode_homo} as an average over Gaussian perturbations of the parameters $x$. We again set $L=20$ for our experiments in Sec.~\ref{s:expts}.

\subsection{Momentum}
\label{ss:momentum}

Momentum is a popular technique for accelerating the convergence of SGD~\citep{nesterov1983method}. We employ this in our experiments by maintaining an auxiliary variable $z^k$ which intuitively corresponds to the velocity of $x^k$. The $x^k$ update in~\eqref{eq:esgd_discrete_x} is modified to be
\[
 \aed{
  x^{k+1} &= z^k -\eta\ \grad f_{\trm{mb}}(x)(z^k - \mu^{k})\\
  z^{k+1} &= x^k + \d\ \rbrac{x^k - x^{k-L}}
 }
\]
if $k/L$ is an integer; $x^k$ and $z^k$ are left unchanged otherwise. The other updates for $x^k$ in~\eqref{eq:hj_discrete} and~\eqref{eq:heat_avg} are modified similarly. The momentum parameter $\d=0.9$ is fixed for all experiments.

\subsection{Tuning \texorpdfstring{$\g$}{g}}
Theorem~\ref{thm:improvement} gives a rigorous justification to the technique of `scoping'' where we set $\g$ to a large value and reduce it as training progresses, indeed, as Figure~\ref{fig:smoothing} shows, a larger $\g$ leads to a smoother loss. Scoping reduces the smoothing effect and as $\g \to 0$, we have $f_\g(x) \to f(x)$. For the updates in Sec.~\ref{ss:esgd_algorithmic},~\ref{ss:hj_algorithmic} and~\ref{ss:heat_algorithmic}, we update $\g$ every time $k/L$ is an integer using
\[
 \g = \g_0\ (1-\g_1)^{k/L};
\]
we pick $\g_0 \in [10^{-4},\ 10^{-1}]$ so as to obtain the best validation error while $\g_1$ is fixed to $10^{-3}$ for all experiments. We have obtained similar results by normalizing $\g_0$ by the dimensionality of $x$, i.e., if we set $\g_0 := \g_0/n$ this parameter becomes insensitive to different datasets and neural networks.

\section{Empirical validation}
\label{s:expts}

\begin{wrapfigure}{r}{0.2\textwidth}
\centering
\includegraphics[width=0.2\textwidth]{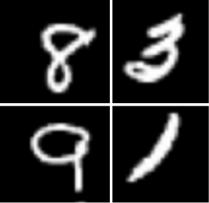}
\caption{\footnotesize MNIST}
\label{fig:mnist_data}
\vspace{0.1in}
\includegraphics[width=0.2\textwidth]{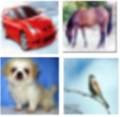}
\caption{\footnotesize CIFAR-10}
\label{fig:cifar_data}
\vspace*{-0.3in}
\end{wrapfigure}

We now discuss experimental results on deep neural networks that demonstrate that the PDE methods considered in this article achieve good regularization, aid optimization and lead to improved classification on modern datasets.

\subsection{Setup for deep networks}
\label{ss:setup}

In machine learning, one typically splits the data into three sets: (i) the training set $D$, used to compute the optimization objective (empirical loss $f$), (ii) the validation set, used to tune parameters of the optimization process which are not a part of the function $f$, e.g., mini-batch size, step-size, number of iterations etc., and, (iii) a test set, used to quantify generalization as measured by the empirical loss on previously-unseen (sequestered) data. However, for the benchmark datasets considered here, it is customary in the literature to not use a separate test set. Instead, one typically reports test error on the validation set itself; for the sake of enabling direct comparison of numerical values, we follow the same practice here.

We use two standard computer vision datasets for the task of image classification in our experiments. The MNIST dataset~\citep{lecun1998gradient} contains $70,000$ gray-scale images of size $28\times 28$, each image portraying a hand-written digit between $0$ to $9$. This dataset is split into $60,000$ training images and $10,000$ images in the validation set. The CIFAR-10 dataset~\citep{krizhevsky2009learning} contains $60,000$ RGB images of size $32 \times 32$ of $10$ objects (aeroplane, automobile, bird, cat, deer, dog, frog, horse, ship and truck). The images are split as $50,000$ training images and $10,000$ validation images. Figure~\ref{fig:mnist_data} and Figure~\ref{fig:cifar_data} show a few example images from these datasets.

\subsubsection{Pre-processing}

We do not perform any pre-processing for the MNIST dataset. For CIFAR-10, we perform a global contrast normalization~\citep{coates2010analysis} followed by a ZCA whitening transform (sometimes also called the ``Mahalanobis transformation'')~\citep{krizhevsky2009learning}. We use ZCA as opposed to principal component analysis (PCA) for data whitening because the former preserves visual characteristics of natural images unlike the latter; this is beneficial for convolutional neural networks. We do not perform any data augmentation, i.e., transformation of input images by mirror-flips, translations, or other transformations.

\subsubsection{Loss function}
We defined the zero-one loss $f_i(x)$ for a binary classification problem in Sec.~\ref{ss:deep_networks}. For classification tasks with $K$ classes, it is common practice to construct a network with $K$ distinct outputs. The output $y(x;\ \xi^i)$ is thus a vector of length $K$ that is further normalized to sum up to $1$ using the $\softmax$ operation defined in Sec.~\ref{ss:mnist}. The cross-entropy loss typically used for optimizing deep networks is defined as:
\[
 f_i(x) := - \sum_{k=1}^K\ \ind_{\cbrac{y^i=k}}\ \log y(x;\ \xi^i)_{k}.
\]
We use the cross-entropy loss for all our experiments.

\subsubsection{Training procedure}
We run the following algorithms for each neural network, their algorithmic details are given in Sec.~\ref{s:algorithmic_details}. The ordering below corresponds to the ordering in the figures.
\ilist{
 \item \textbf{Entropy-SGD:} the algorithm of~\citet{chaudhari2016entropy} described in Sec.~\ref{ss:esgd_algorithmic},
 \item \textbf{HEAT:} smoothing by the heat equation described in Sec.~\ref{ss:heat_algorithmic},
 \item \textbf{HJ:} updates for the non-viscous Hamilton-Jacobi equation described in Sec.~\ref{ss:hj_algorithmic},
 \item \textbf{SGD:} stochastic gradient descent given by~\eqref{eq:sgd},
}
All the above algorithms are stochastic in nature; in particular, they sample a mini-batch of images randomly from the training set at each iteration before computing the gradient. A significant difference between \textbf{Entropy-SGD} and \textbf{HJ} is that the former adds additional noise, while the latter only uses the intrinsic noise from the mini-batch gradients.
We therefore report and plot the mean and standard deviation across $6$ distinct random seeds for each experiment. The mini-batch size ($\abs{i_t}$ in~\eqref{eq:sgd}) is fixed at $128$. An ``epoch'' is defined to be one pass over the entire dataset, for instance, it consists of $\ceil{60,000/128} = 469$ iterations of~\eqref{eq:sgd} for MNIST with each mini-batch consisting of $128$ different images.

We use SGD as a baseline for comparing the performance of the above algorithms in all our experiments.

\begin{remark}[Number of gradient evaluations]
A deep network evaluates the average gradient over an entire mini-batch of $128$ images in one single back-propagation operation~\citep{rumelhart1988learning}. We define the number of back-props per weight update by $L$. SGD uses one back-prop per update (i.e., $L=1$) while we use $L=20$ back-props for Entropy-SGD (cf.\@ Sec.~\ref{ss:esgd_algorithmic}) and the heat equation (cf.\@ Sec.~\ref{ss:heat_algorithmic}) and $L=5$ back-props for the non-viscous HJ equation (cf.\@ Sec.~\ref{ss:hj_algorithmic}). For each of the algorithms, we plot the training loss or the validation error against the number of ``effective epochs'', i.e., the number of epochs is multiplied by $L$ which gives us a uniform scale to measure the computational efficiency of various algorithms. This is a direct measure of the wall-clock time and is agnostic to the underlying hardware and software implementation.
\end{remark}

\subsection{MNIST}
\label{ss:mnist}

\subsubsection{Fully-connected network}
\label{sss:mnistfc}

Consider a ``fully-connected'' network on MNIST
\[
 \mnistfc:\ \data_{784} \to \drop_{0.2} \to \underbrace{\fc_{1024} \to \drop_{0.2}}_{\times 2} \to \fc_{10} \to \softmax.
\]
The $\data$ layer reshapes each MNIST image as a vector of size $784$. The notation $\fc_{d}$ denotes a ``fully connected'' dense matrix $x^k$ in~\eqref{eq:yhat} with $d$ output dimensions, followed by the ReLU nonlinearity (cf.\@ Sec.~\ref{ss:deep_networks}) and an operation known as batch normalization~\citep{ioffe2015batch} which whitens the output of each layer by subtracting the mean across a mini-batch and dividing by the standard deviation. The notation $\drop_{p}$ denotes the dropout layer~\citep{srivastava2014dropout} which randomly sets a fraction $p$ of the weights to zero at each iteration; this is a popular regularization technique in deep learning, see~\citet{kingma2015variational,achilleS2017information} for a Bayesian perspective. For $10$ classes in the classification problem, we create an output vector of length $10$ in the last $\fc$ layer followed by $\softmax$ which picks the largest element of this vector, which is interpreted as the output (or ``prediction'') by this network. 
The $\mnistfc$ network has $n=1.86$ million parameters.

As Figure~\ref{fig:mnistfc_valid} and Table~\ref{tab:expts} show, the final validation error for all algorithms is quite similar. The convergence rate varies, in particular, Entropy-SGD converges fastest in this case. Note that $\mnistfc$ is a small network and the difference in the performance of the above algorithms, e.g., $1.08\ \%$ for Entropy-SGD versus $1.17\ \%$ for HJ, is minor.

\subsubsection{LeNet}
\label{sss:lenet}

Our second network for MNIST is a convolutional neural network (CNN) denoted as follows:
\[
 \lenet:\ \data_{28\times 28}\ \to \convolution_{20, 5, 3} \to \drop_{0.25} \to \convolution_{50, 5, 2} \to \drop_{0.25} \to \fc_{500} \to \drop_{0.25} \to \fc_{10} \to \softmax.
\]
The notation $\convolution_{c,k,m}$ denotes a 2D convolutional layer with $c$ output channels, each of which is the sum of a channel-wise convolution operation on the input using a learnable kernel of size $k\times k$. It further adds ReLU nonlinearity, batch normalization and an operation known as max-pooling which down-samples the image by picking the maximum over a patch of size $m\times m$. Convolutional networks typically perform much better than fully-connected ones on image classification task despite having fewer parameters, $\lenet$ has only $n=131,220$ parameters.

\begin{figure}[htp!]
\centering
 \begin{subfigure}[t]{0.4\textwidth}
 \centering
 \includegraphics[width=\textwidth]{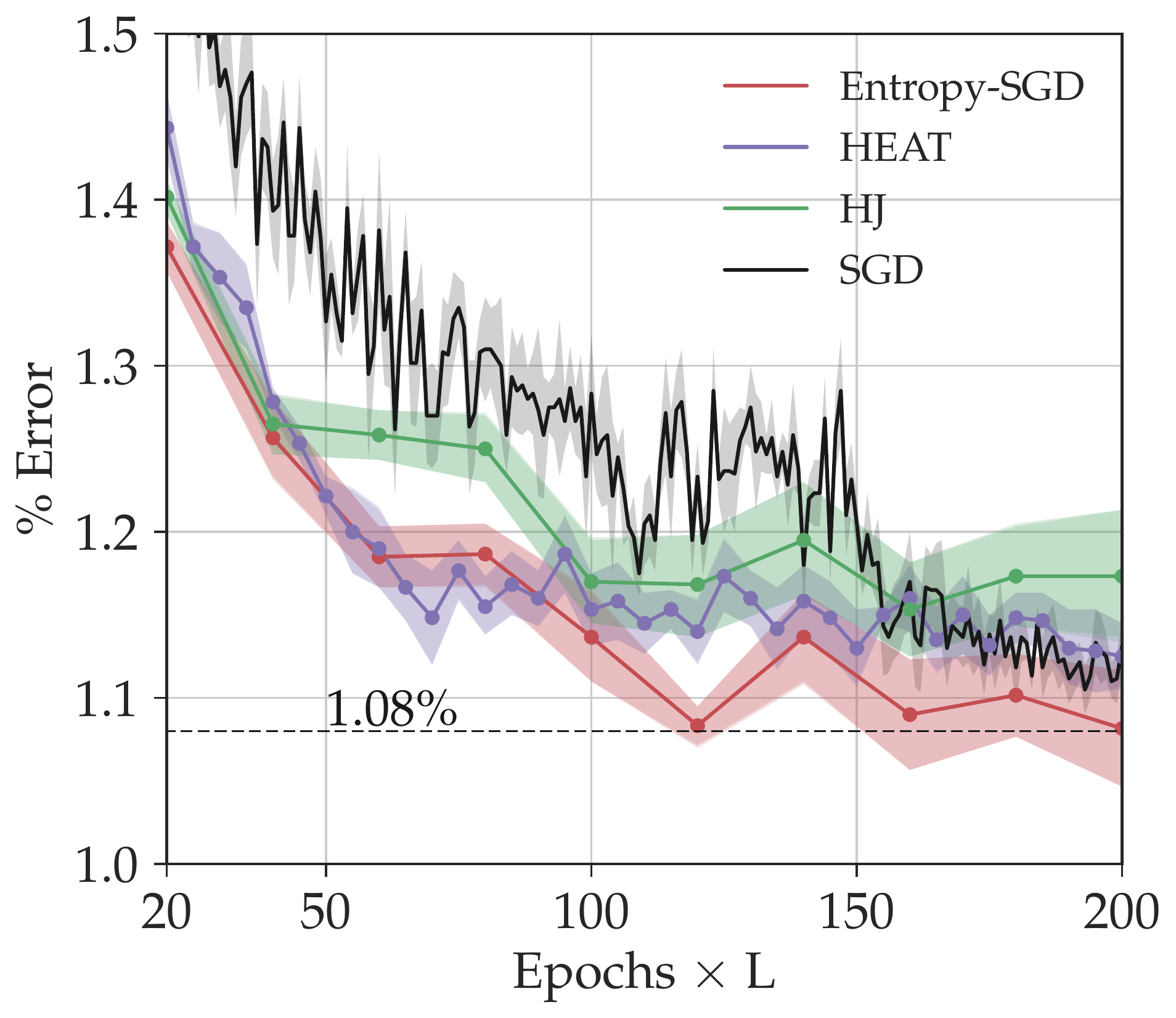}
 \caption{\small $\mnistfc$: Validation error}
 \label{fig:mnistfc_valid}
 \end{subfigure}
 \hspace{0.15in}
 \begin{subfigure}[t]{0.4\textwidth}
 \centering
 \includegraphics[width=\textwidth]{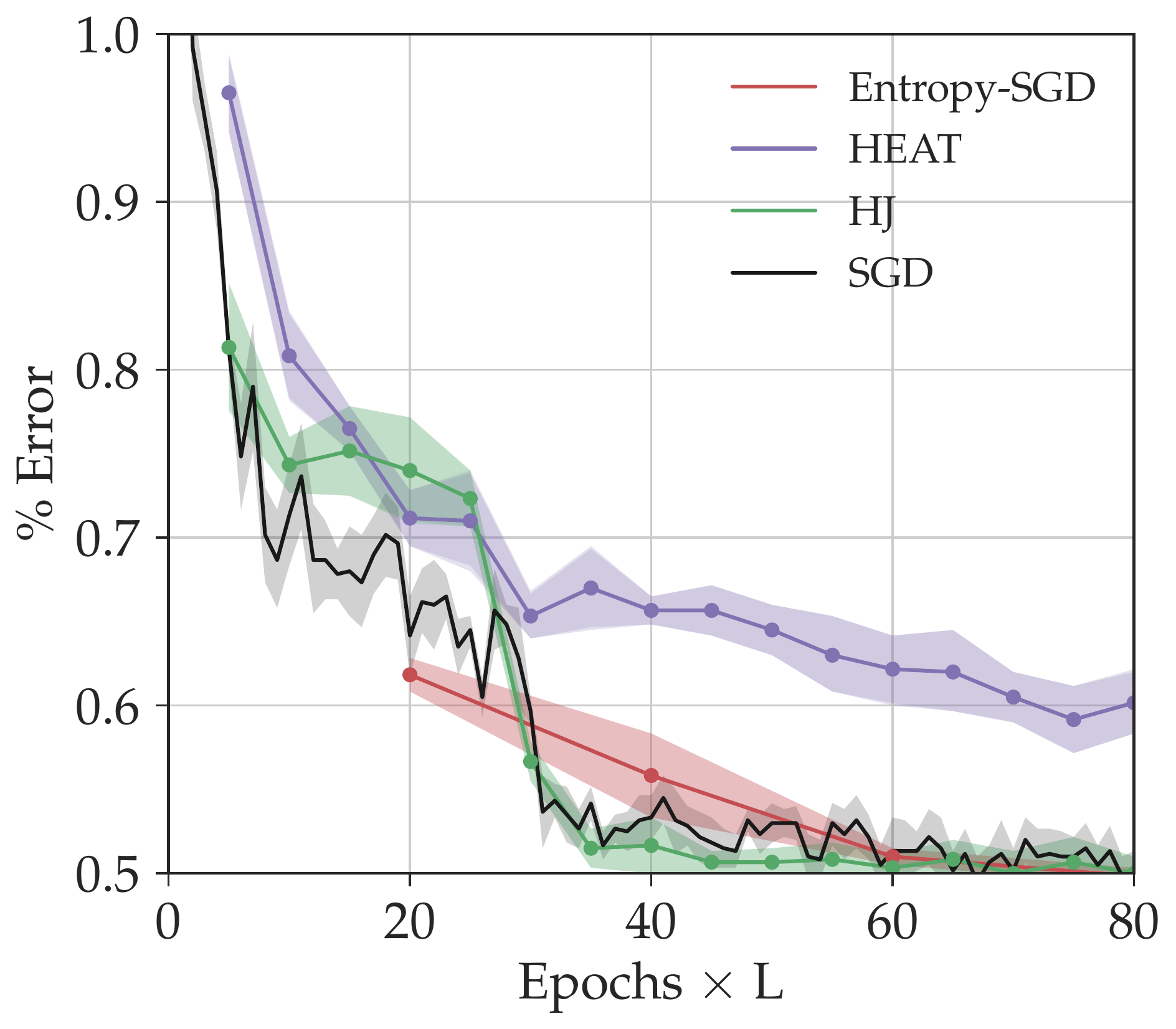}
 \caption{\small $\lenet$: Validation error}
 \label{fig:lenet_valid}
 \end{subfigure}
\caption{\small $\mnistfc$ and $\lenet$ on MNIST (best seen in color)}
\label{fig:mnist}
\end{figure}
The results for $\lenet$ are described in Figure~\ref{fig:lenet_valid} and Table~\ref{tab:expts}. This is a convolutional neural network and performs better than $\mnistfc$ on the same dataset. The final validation error is very similar for all algorithms at $0.50\ \%$ with the exception of the heat equation which only reaches $0.59\ \%$. We can also see that the other algorithms converge quickly, in about half the number of effective epochs as that of $\mnistfc$.

\subsection{CIFAR}
\label{ss:cifar}

The CIFAR-10 dataset is more complex than MNIST and fully-connected networks typically perform poorly. We will hence employ a convolutional network for this dataset. We use the All-CNN-C architecture introduced by~\citet{springenberg2014striving} and add batch-normalization:
\[
 \allcnn:\ \data_{3\times 32\times 32}\ \to \drop_{0.2} \to \block_{96, 3} \to \block_{192, 3} \to \underbrace{\convolution_{192,3}}_{\times 2} \to \convolution_{10} \to \meanpool_{10} \to \softmax.
\]
where
\[
 \block_{d,3}:\ \convolution_{d,3,1} \to \convolution_{d,3,1}\ \to \convolution^*_{d,3,1} \to \drop_{0.5}.
\]
The final convolutional layer in the above $\block$ denoted by $\convolution^*$ is different from others; while they perform convolution at every pixel otherwise known as a ``stride'' of $1$ pixel, $\convolution^*$ on the other hand uses a stride of $2$; this results in the image being downsampled by a factor of two. Note that $\convolution_{c,k,m}$ with $m=1$ does not result in any downsampling. Max-pooling usually results in a drastic reduction of the image size and by replacing it with a strided convolution, $\allcnn$ achieves improved performance with much fewer parameters than many other networks on CIFAR-10. The final layer denoted as $\meanpool$ takes an input of size $10 \times 8 \times 8$ and computes the spatial average to obtain an output vector of size $10$. This network has $n=1.67$ million parameters.

\begin{figure}[htp!]
\centering
 \begin{subfigure}[t]{0.4\textwidth}
 \centering
 \includegraphics[width=\textwidth]{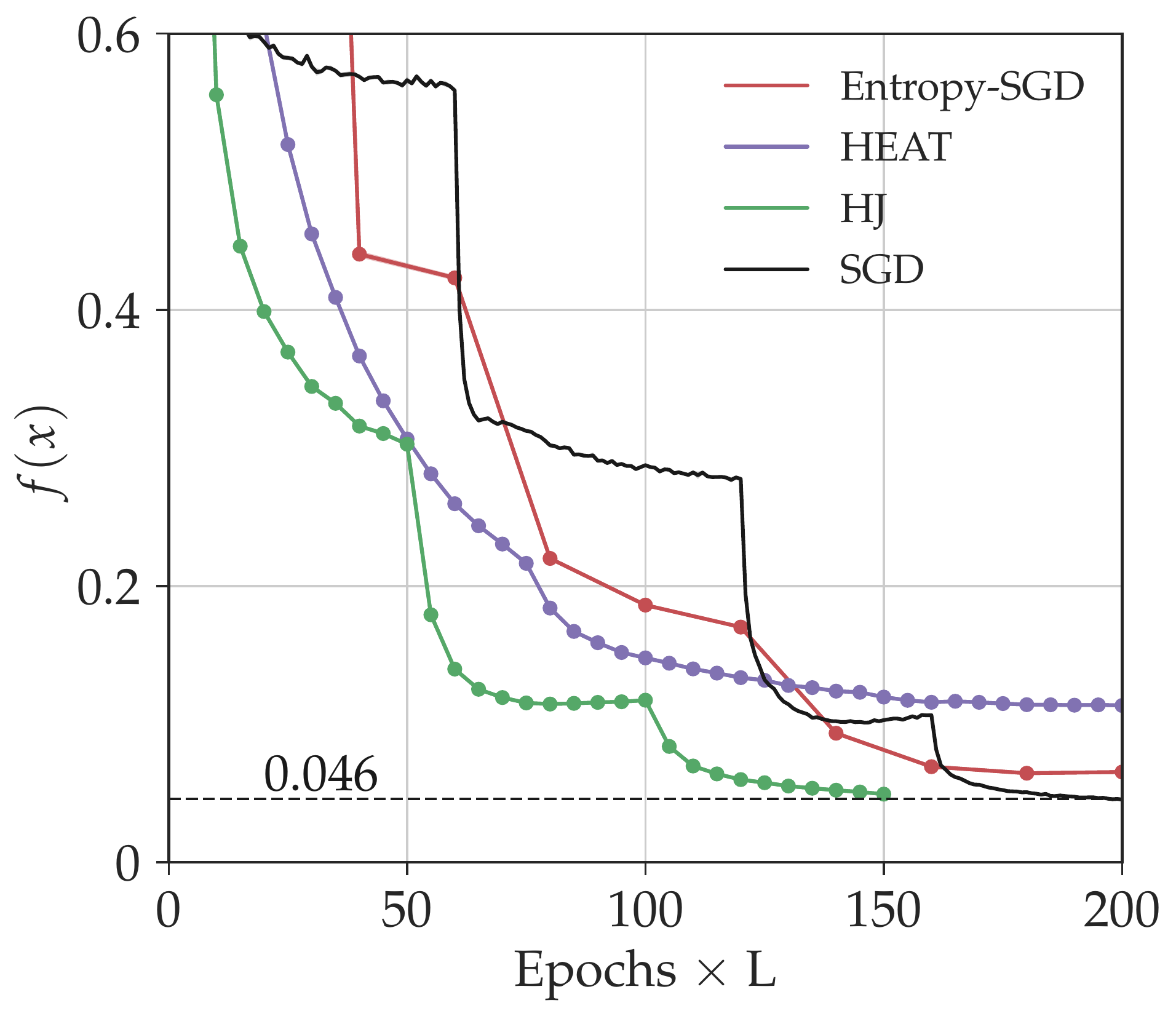}
 \caption{\small $\allcnn$: Training loss}
 \label{fig:allcnn_loss}
 \end{subfigure}
 \hspace{0.15in}
 \begin{subfigure}[t]{0.4\textwidth}
 \centering
 \includegraphics[width=\textwidth]{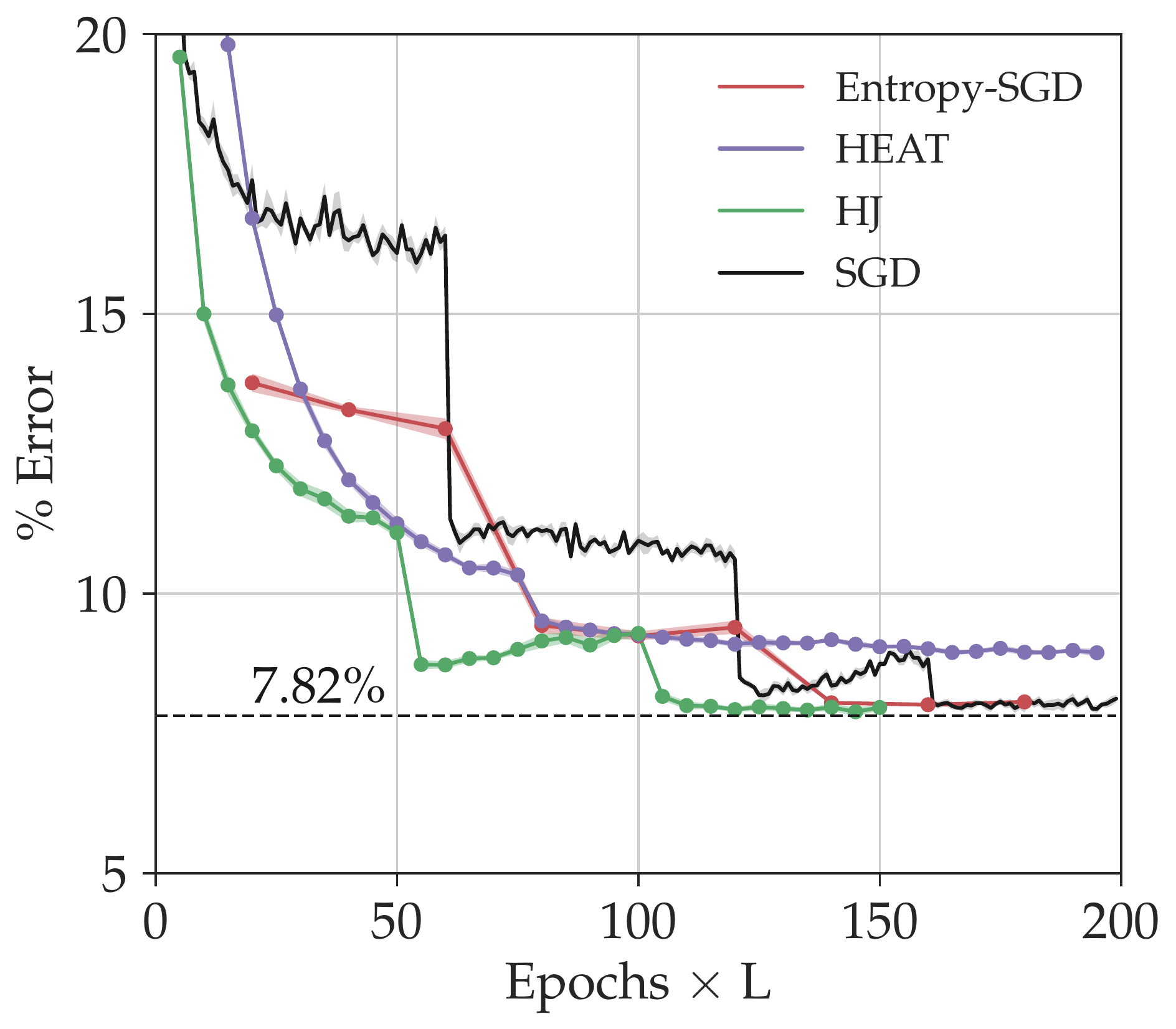}
 \caption{\small $\allcnn$: Validation error}
 \label{fig:allcnn_valid}
 \end{subfigure}
\caption{\small Training loss and validation error on CIFAR-10 (best seen in color)}
\label{fig:cifar}
\end{figure}

Figure~\ref{fig:allcnn_loss} and~\ref{fig:allcnn_valid} show the training loss and validation error for the $\allcnn$ network on the CIFAR-10 dataset. The Hamilton-Jacobi equation (HJ) obtains a validation error of $7.89\ \%$ in $145$ epochs and thus performs best among the algorithms tested here; it also has the lowest training cross-entropy loss of $0.046$. Note that both HJ and Entropy-SGD converge faster than SGD. The heat equation performs again poorly on this dataset and has a much higher validation error than others ($9.04 \%$). This is consistent with our discussion in Sec.~\ref{ss:heat_vs_hjb} which suggests that the viscous or non-viscous HJ equations result in better smoothing than the heat equation.

{
\setlength{\heavyrulewidth}{1.5pt}
\renewcommand{\arraystretch}{1.5}
\begin{table}[H]
\centering
\large
\resizebox{0.8 \columnwidth}{!}
{
\begin{tabular}{p{4cm} | c | c | c | c}
\toprule
 \rowcolor{gray!15} Model & Entropy-SGD & HEAT & HJ & SGD\\
\toprule
  $\mnistfc$ & ${\bf 1.08 \pm 0.02\ @\ 120}$
   & $1.13 \pm 0.02\ @\ 200$
   & $1.17 \pm 0.04\ @\ 200$
   & $1.10 \pm 0.01\ @\ 194$\\

 \rowcolor{gray!15} $\lenet$ & $0.5 \pm 0.01\ @\ 80$
   & $0.59 \pm 0.02\ @\ 75$
   & ${\bf 0.5 \pm 0.01\ @\ 70}$
   & $0.5 \pm 0.02\ @\ 67$\\

  $\allcnn$ & $7.96 \pm 0.05\ @\ 160$
   & $9.04 \pm 0.04\ @\ 150$
   & ${\bf 7.89 \pm 0.07\ @\ 145}$
   & $7.94 \pm 0.06\ @\ 195$\\
\bottomrule
\end{tabular}
}
\caption{\small Summary of experimental results: Validation error (\%) @ Effective epochs}
\label{tab:expts}
\end{table}

\section{Discussion}
\label{s:discussion}

Our results apply nonlinear PDEs, stochastic optimal control, and stochastic homogenization to the analysis of two recent and effective algorithms for the optimization of neural networks. Our analysis also contributes to devising improved algorithms.

We replaced the standard stochastic gradient descent (SGD) algorithm for the function $f(x)$, with SGD on the two variable function $H(x,y;\g) = f(y) + \g^{-1}|x-y|^2/2$, along with a small parameter $\e$~\eqref{eq:le_sde_homo}. Using the new system, in the limit $\e\to 0$,
we can provably, and quantitatively, improve the expected value of the original loss function. The effectiveness of our algorithm comes from the connection, via homogenization, of system of SDEs to the gradient of the function $u(x,t)$ which is the solution of the~\eqref{eq:vhj} PDE with initial data $f(x)$. The function $H$ is more convex in $y$ that the original function $f$. The convexity of $H$ in $y$ is related to the exponentially fast convergence of the dynamics, a connection explained by the celebrated gradient flow interpretation of Fokker-Planck equation \cite{jordan1998variational}.
Ergodicity of the dynamics for \eqref{eq:le_sde_homo} makes clear that the local entropy SGD is equivalent to the influential distributed algorithm, Elastic-SGD \cite{zhang2015deep}. These insights may lead to improved distributed algorithms, which are presently of great interest.

On a practical level, the large number of hyper-parameters involved in training deep neural networks is very costly, in terms of both human effort and computer time. Our analysis lead to better understanding of the parameters involved in the algorithms, and provides an insight into the choice of hyper-parameters for these methods. In particular (i) the parameter $\g$ is now understood as the time $t$, in the PDE \eqref{eq:vhj} (ii) scoping of $\g$, which was seen as a heuristic, is now rigorously justified, (iii) we set the extrinsic noise parameter $\b^{-1}_{\trm{ex}}$ to zero, resulting in a simpler, more effective algorithm, and (iv) we now understand that below a critical value of $\g$ (which is determined by the requirement that $H$ be convex in $y$), the convergence of the dynamics in \eqref{eq:le_sde_homo} to the invariant measure is exponentially fast.

Conceptually, while simulated annealing and related methods work by modulating the level of noise in the dynamics, our algorithm works by modulating the smoothness of the underlying loss function. Moreover, while most algorithms used in deep learning derive their motivation from the literature on convex optimization, the algorithms we have presented here are specialized to non-convex loss functions and have been shown to perform well on these problems both in theory and in practice.

\section{Acknowledgments}
\label{sec:ack}

AO is supported by a grant from the Simons Foundation (395980); PC and SS by ONR N000141712072, AFOSR FA95501510229 and ARO W911NF151056466731CS; SO by ONR N000141410683, N000141210838, N000141712162 and DOE DE-SC00183838. AO would like to thank the hospitality of the UCLA mathematics department where this work was completed.

{
\footnotesize
\bibliographystyle{apalike}
\bibliography{chaudhari.oberman.ea}
}

\end{document}